\newcommand{\defeq}{\vcentcolon=}
\newcommand{\eqdef}{=\vcentcolon}
\newcommand{\norm}[1]{\left\lVert#1\right\rVert}
\newtheorem{theorem}{Theorem}[section]
\newtheorem{lemma}[theorem]{Lemma}
\newtheorem{proposition}[theorem]{Proposition}
\theoremstyle{definition}
\newtheorem{defn}[theorem]{Definition}
\theoremstyle{remark}
\title{Dynamic matrix recovery from incomplete observations under an exact low-rank constraint}
\author{
  Liangbei Xu \: \: \: Mark A.~Davenport \\
  Department of Electrical and Computer Engineering\\
  Georgia Institute of Technology\\
  Atlanta, GA 30318 \\
  {lxu66@gatech.edu \: \:\: mdav@gatech.edu} \\
}
\begin{document}

\maketitle

\begin{abstract}
 Low-rank matrix factorizations arise in a wide variety of applications -- including recommendation systems, topic models, and source separation, to name just a few.  In these and many other applications, it has been widely noted that by incorporating temporal information and allowing for the possibility of time-varying models, significant improvements are possible in practice. However, despite the reported superior empirical performance of these dynamic models over their static counterparts, there is limited theoretical justification for introducing these more complex models. In this paper we aim to address this gap by studying the problem of recovering a dynamically evolving low-rank matrix from incomplete observations. First, we propose the locally weighted matrix smoothing (LOWEMS) framework as one possible approach to dynamic matrix recovery. We then establish error bounds for LOWEMS in both the {\em matrix sensing} and {\em matrix completion} observation models. Our results quantify the potential benefits of exploiting dynamic constraints both in terms of recovery accuracy and sample complexity. To illustrate these benefits we provide both synthetic and real-world experimental results.
\end{abstract}

\section{Introduction}
Suppose that $X \in \mathbb R^{n_1\times n_2}$ is a rank-$r$ matrix with $r$ much smaller than $n_1$ and $n_2$. We observe $X$ through a linear operator $\mathcal A:\mathbb R^{n_1\times n_2} \rightarrow \mathbb R^{m}$,
\begin{equation*}
y = \mathcal A(X), \quad y\in \mathbb R^{m}.
\end{equation*}
In recent years there has been a significant amount of progress in our understanding of how to recover $X$ from observations of this form even when the number of observations $m$ is much less than the number of entries in $X$.  (See \cite{davenportoverview} for an overview of this literature.) When $\mathcal A$ is a set of weighted linear combinations of the entries of $X$, this problem is often referred to as the \emph{matrix sensing} problem. In the special case where $\mathcal A$ samples a subset of entries of $X$, it is known as the \emph{matrix completion} problem. There are a number of ways to establish recovery guarantee in these settings. Perhaps the most popular approach for theoretical analysis in recent years has focused on the use of nuclear norm minimization as a convex surrogate for the (nonconvex) rank constraint~\cite{agarwal2012noisy,candes2010matrix,candes2011tight,candes2009exact,candes2010power,davenport20141,
klopp2014noisy, negahban2011estimation, recht2010guaranteed,recht2008necessary}. An alternative, however is to aim to directly solve the problem under an exact low-rank constraint. This leads a non-convex optimization problem, but has several computational advantages over most approaches to minimizing the nuclear norm and is widely used in large-scale applications (such as recommendation systems)~\cite{koren2009bellkor}. In general, popular algorithms for solving the rank-constrained models -- e.g., alternating minimization and alternating gradient descent -- do not have as strong of convergence or recovery error guarantees due to the non-convexity of the rank constraint. However, there has been significant progress on this front in recent years~\cite{Hardt2014fast, hardt2014understanding,jain2014fast, jain2013low,keshavan2009matrix,sun2015guaranteed, zhao2015nonconvex}, with many of these algorithms now having guarantees comparable to those for nuclear norm minimization.

Nearly all of this existing work assumes that the underlying low-rank matrix $X$ remains fixed throughout the measurement process.  In many practical applications, this is a tremendous limitation. For example, users' preferences for various items may change (sometimes quite dramatically) over time. Modelling such drift of user's preference has been proposed in the context of both music and movies as a way to achieve higher accuracy in recommendation systems~\cite{dror2012yahoo,koren2010collaborative}. Another example in signal processing is dynamic non-negative matrix factorization for the blind signal separation problem~\cite{mohammadiha2015state}. In these and many other applications, explicitly modelling the dynamic structure in the data has led to superior empirical performance.  However, our theoretical understanding of dynamic low-rank matrix recovery is still very limited.

In this paper we provide the first theoretical results on the dynamic low-rank matrix recovery problem. We determine the sense in which dynamic constraints can help to recover the underlying time-varying low-rank matrix in a particular dynamic model and quantify this impact through recovery error bounds. To describe our approach, we consider a simple example where we have two rank-$r$ matrices $X^1$ and $X^2$. Suppose that we have a set of observations for each of $X^1$ and $X^2$, given by
\begin{equation*}
y^i = \mathcal{A}^i\left(X^i\right),\quad i = 1,2.
\end{equation*}
The na\"{i}ve approach is to use $y^1$ to recover $X^1$ and $y^2$ to recover $X^2$ separately. In this case the number of observations required to guarantee successful recovery is roughly $m^i \ge C^i r \max(n_1, n_2)$ for $i =1,2$ respectively, where $C^1,C^2$ are fixed positive constants (see \cite{candes2011tight}). However, if we know that $X^2$ is close to $X^1$ in some sense (for example, if $X^2$ is a small perturbation of $X^1$), then the above approach is suboptimal both in terms of recovery accuracy and sample complexity, since in this setting $y^1$ actually contains information about $X^2$ (and similarly, $y^2$ contains information about $X^1$). There are a variety of possible approaches to incorporating this additional information. The approach we will take is inspired by the LOWESS (locally weighted scatterplot smoothing) approach from non-parametric regression. In the case of this simple example, if we look just at the problem of estimating $X^2$, our approach reduces to solving a problem of the form
\[
\min_{X^2} \|\mathcal{A}^2(X^2) - y^2 \|_2^2 + \lambda \|\mathcal{A}^1(X^2) - y^1 \|_2^2 \qquad \text{s.t.} \quad \textrm{rank}\left( X^2\right) \le r,
\]
where $\lambda$ is a parameter that determines how strictly we are enforcing the dynamic constraint (if $X^1$ is very close to $X^2$ we can set $\lambda$ to be larger, but if $X^1$ is far from $X^2$ we will set it to be comparatively small). This approach generalizes naturally to the {\em locally weighted matrix smoothing} (LOWEMS) program described in Section~\ref{sec:prob}. Note that it has a (simple) convex objective function, but a non-convex rank constraint. Our analysis in Section~\ref{sec:theory} shows that the proposed program outperforms the above na\"{i}ve recovery strategy both in terms of recovery accuracy and sample complexity.

We should emphasize that the proposed LOWEMS program is non-convex due to the exact low-rank constraint. Inspired by previous work on matrix factorization, we propose using an efficient alternating minimization algorithm (described in more detail in Section~\ref{sec:alg}). We explicitly enforce the low-rank constraint by optimizing over a rank-$r$ factorization and alternately minimize with respect to one of the factors while holding the other one fixed. This approach is popular in practice since it is typically less computationally complex than nuclear norm minimization based algorithms. In addition, thanks to recent work on global convergence guarantees for alternating minimization for low-rank matrix recovery~\cite{hardt2014understanding, jain2013low,zhao2015nonconvex}, one can reasonably expect similar convergence guarantees to hold for alternating minimization in the context of LOWEMS, although we leave the pursuit of such guarantees for future work.

To empirically verify our analysis, we perform both synthetic and real world experiments, described in Section~\ref{sec:exp}. The synthetic  experimental results demonstrate that LOWEMS outperforms the na\"{i}ve approach in practice both in terms of recovery accuracy and sample complexity. We also demonstrate the effectiveness of LOWEMS in the context of recommendation systems.

Before proceeding, we briefly state some of the notation that we will use throughout. For a vector $x \in \mathbb R^n$, we let $\norm{x}_{p}$ denote the standard $\ell_p$ norm. Given a matrix $X \in \mathbb R ^{n_1\times n_2}$, we use $X_{i:}$ to denote the $i^{\text{th}}$ row of $X$ and $X_{:j}$ to denote the $j^{\text{th}}$ column of $X$. We let $\norm{X}_F$ denote the the Frobenius norm, $\norm{X}_2$ the operator norm, $\norm{X}_*$ the nuclear norm, and $\norm{X}_\infty = \max_{i,j} |X_{ij}|$ the element-wise infinity norm.  Given a pair of matrices $ X,Y \in \mathbb R^{n_1 \times n_2}$, we let $\left\langle X, Y\right\rangle = \sum_{i,j} X_{ij} Y_{ij} = \textrm {Tr}\left( Y^T X \right)$ denote the standard inner product. Finally, we let $n_{\max}$ and $n_{\min}$ denote $\max\{n_1, n_2\}$ and $\min \{n_1, n_2\}$ respectively.

\section{Problem formulation}
\label{sec:prob}

The underlying assumption throughout this paper is that our low-rank matrix is changing over time during the measurement process. For simplicity we will model this through the following discrete dynamic process: at time $t$, we have a low-rank matrix $X^t\in \mathbb R^{n_1\times n_2}$ with rank $r$, which we assume is related to the matrix at previous time-steps via
\[
X^t = f(X^1,\ldots, X^{t-1}) + \epsilon^t,
\]
where $\epsilon^t$ represents noise. We assume that we observe each $X^t$ through a linear operator $\mathcal A^t : \mathbb R^{n_1\times n_2} \rightarrow \mathbb R^{m^t}$,
\begin{equation}
\label{eq:measure}
y^t = \mathcal A^t(X^t)+z^t, \quad y^t,z^t\in \mathbb R^{m^t} ,
\end{equation}
where $z^t$ is measurement noise. In our problem we will suppose that we observe up to $d$ time steps, and our goal is to recover $\{X^t\}_{t=1}^{d}$ jointly from $\{y^t\}_{t=1}^{d}$.

The above model is sufficiently flexible to incorporate a wide variety of dynamics, but we will make several simplifications. First, we note that we can impose the low-rank constraint explicitly by factorizing $X^t$ as $X^t = U^t \left(V^t\right)^T, U^t\in \mathbb R^{n_1\times r}, V^t \in \mathbb R^{n_2 \times r}$. 
In general both $U^t$ and $V^t$ may be changing over time. However, in some applications, it is reasonable to assume that only one set of factors is changing. For example, in a recommendation system where our matrix represent user preferences, if the rows correspond to items and the columns correspond to users, then $U^t$ contains the latent properties of the items and $V^t$ models the latent preferences of the users. In this context it is reasonable to assume that only $V^t$ changes over time~\cite{dror2012yahoo, koren2010collaborative}, and that there is a fixed matrix $U$ (which we may assume to be orthonormal) such that we can write $X^t = U V^t$ for all $t$. Similar arguments can be made in a variety of other applications, including personalized learning systems, blind signal separation, and more. 

Second, we will assume a Markov property on $f$, so that $X^t$ (or equivalently, $V^t$) only depends on the previous $X^{t-1}$ (or $V^{t-1}$). Furthermore, although other dynamic models could be accommodated, for the sake of simplicity in our analysis we consider the simple model on $V^t$ where
\begin{equation}
\label{eq:V-purterbation}
V^t = V^{t-1} + \epsilon^t, \quad t = 2,\ldots,d.
\end{equation}
We will also assume that both $\epsilon^t$ and the measurement noise $z^t$ are i.i.d. zero-mean Gaussian noise. 

To simplify our discussion, we will assume that our goal is to recover the matrix at the most recent time-step, i.e., we wish to estimate $X^d$ from $\{ y^t\}_{t=1}^d$. Our general approach can be stated as follows.  Let $\mathbb C(r) = \{X\in \mathbb R^{n_1\times n_2}: \textrm{rank}(X)\le r\} $. The LOWEMS estimator is given by the following optimization program:
\begin{equation}
\label{eq:weighted-matrix-recovery}
\hat X^d = \mathop{\rm arg~min}_{X\in \mathbb C(r)} \mathcal L\left(X\right) = \mathop{\rm arg~min}_{X\in \mathbb C(r)}  \frac{1}{2}\sum_{t=1}^{d} w_t \norm {\mathcal A^t\left( X \right) - y^t }_2^2,
\end{equation}
where $\{w_t\}_{t=1}^d$ are non-negative weights, and we assume $\sum_{t=1}^d w_t = 1$ to avoid ambiguity. In the following section we provide bounds on the performance of the LOWEMS estimator for two common choices of operators $\mathcal{A}^t$.

\section{Recovery error bounds}
\label{sec:theory}

Given the estimator $\hat{X}^d$ from~\eqref{eq:weighted-matrix-recovery}, we define the recovery error to be $\Delta^d \defeq \hat{X}^d - X^d$. Our goal in this section will be to provide bounds on $\|\hat{X}^d - X^d\|_F$ under two common observation models. Our analysis builds on the following (deterministic) inequality.

\begin{proposition}
\label{thm:basic-ineq}
Both the estimator $\hat{X}^d$ by~\eqref{eq:weighted-matrix-recovery} and~\eqref{eq:weighted-matrix-recovery-completion} satisfies
\begin{equation}
\label{eq:basic-ineq-formal}
\sum_{t=1}^{d} w_t\norm{  \mathcal A^t\left( \Delta^d\right) }_2^2 \le 2\sqrt{2r}\norm{ \sum_{t=1}^{d} w_t \mathcal A^{t*} \left(h^t - z^t \right)}_2 \norm {\Delta^d  }_F,
\end{equation}
where $h^t = \mathcal A^t\left( X^d - X^t\right)$ and $\mathcal{A}^{t*}$ is the adjoint operator of $\mathcal{A}^t$.
\end{proposition}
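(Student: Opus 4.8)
The plan is to follow the standard ``basic inequality'' argument from M-estimation, exploiting the first-order optimality of $\hat X^d$ together with the feasibility of the true matrix. First I would observe that $X^d$ has rank $r$, hence lies in $\mathbb C(r)$ and is feasible for~\eqref{eq:weighted-matrix-recovery}; since $\hat X^d$ is the minimizer, we immediately obtain $\mathcal L(\hat X^d) \le \mathcal L(X^d)$. The identical reasoning applies verbatim to the completion program~\eqref{eq:weighted-matrix-recovery-completion}, because the true matrix $X^d$ will also satisfy whatever additional (e.g.\ infinity-norm) constraint that program imposes, so the optimality inequality holds in both models simultaneously.

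Next I would substitute the measurement model $y^t = \mathcal A^t(X^t) + z^t$ into both sides. Writing $\hat X^d = X^d + \Delta^d$, the residual at time $t$ for the estimator becomes $\mathcal A^t(\hat X^d) - y^t = \mathcal A^t(\Delta^d) + (h^t - z^t)$, where $h^t = \mathcal A^t(X^d - X^t)$ is exactly the model-mismatch term defined in the statement, while the residual for $X^d$ is simply $h^t - z^t$. Expanding the squared norms in $\mathcal L(\hat X^d) \le \mathcal L(X^d)$ and cancelling the common $\sum_t w_t \norm{h^t - z^t}_2^2$ term leaves
\[
\sum_{t=1}^d w_t \norm{\mathcal A^t(\Delta^d)}_2^2 \le -2\sum_{t=1}^d w_t \left\langle h^t - z^t, \mathcal A^t(\Delta^d)\right\rangle.
\]

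To handle the cross term I would move the operators across the inner product using the adjoint, so the right-hand side equals $-2\langle \sum_t w_t \mathcal A^{t*}(h^t - z^t),\, \Delta^d\rangle$, and then bound it in absolute value by $2\norm{\sum_t w_t \mathcal A^{t*}(h^t - z^t)}_2 \norm{\Delta^d}_*$ via the duality between the operator and nuclear norms. The one genuinely structural step---and the crux of where the factor $\sqrt{2r}$ originates---is the observation that $\Delta^d = \hat X^d - X^d$ is a difference of two rank-$r$ matrices and therefore has rank at most $2r$; applying the elementary bound $\norm{\Delta^d}_* \le \sqrt{2r}\,\norm{\Delta^d}_F$ for rank-$(2r)$ matrices then yields precisely~\eqref{eq:basic-ineq-formal}.

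I do not anticipate a serious obstacle: the argument is entirely deterministic and elementary once the optimality inequality is written out. The only points requiring care are bookkeeping the measurement noise $z^t$ and the model mismatch $h^t$ separately through the expansion, and confirming that it is the rank-$2r$ structure of $\Delta^d$ that couples the bound to $\norm{\Delta^d}_F$. The genuinely hard work---controlling the stochastic and mismatch quantity $\norm{\sum_t w_t \mathcal A^{t*}(h^t - z^t)}_2$ under each observation model---is deferred to the subsequent model-specific analyses, so this proposition furnishes only the deterministic scaffolding on which those bounds will rest.
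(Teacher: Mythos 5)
Your proposal is correct and takes essentially the same route as the paper: the paper phrases the consequence of $\mathcal L(\hat X^d)\le \mathcal L(X^d)$ via an (exact) second-order Taylor expansion of the quadratic loss around $X^d$, which is algebraically identical to your direct expansion of the squared residuals, with the gradient term giving your cross term and the Hessian quadratic form giving $\sum_t w_t\norm{\mathcal A^t(\Delta^d)}_2^2$. Both arguments then finish with the same adjoint/trace-duality step and the bound $\norm{\Delta^d}_*\le\sqrt{2r}\,\norm{\Delta^d}_F$ for the rank-$2r$ difference, and your remark that $X^d$ remains feasible for the spikiness-constrained program \eqref{eq:weighted-matrix-recovery-completion} matches the paper's closing observation.
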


This is a deterministic result that holds for any set of $\{\mathcal{A}^t\}$.  The remaining work is to lower bound the LHS of \eqref{eq:basic-ineq-formal}, and upper bound the RHS of \eqref{eq:basic-ineq-formal} for concrete choices of $\{\mathcal{A}^t\}$. In the following sections we derive such bounds in the settings of both Gaussian matrix sensing and matrix completion. For simplicity and without loss of generality, we will assume $m^1=\ldots= m^d \eqdef m_0$, so that the total number of observations is simply $m= dm_0$.

\subsection{Matrix sensing setting}
For the matrix sensing problem, we will consider the case where all operators $\mathcal A^t$ correspond to Gaussian measurement ensembles, defined as follows.
\begin{defn}\cite{candes2011tight}
\label{def:gaussian-ensemble}
A linear operator $\mathcal A:\mathbb R^{n_1\times n_2}\rightarrow\mathbb R^{m}$ is a Gaussian measurement ensemble if we can express each entry of $\mathcal A\left( X\right)$ as $\left[ \mathcal A\left( X\right)\right]_i = \left\langle A_i, X\right\rangle$ for a matrix $A_i$ whose entries are i.i.d.\ according to $\mathcal N\left( 0,1/m\right)$, and where the matrices $A_1, \ldots, A_m$ are independent from each other.
\end{defn}
Also, we define the matrix restricted isometry property (RIP) for a linear map $\mathcal A$.
\begin{defn}\cite{candes2011tight}
\label{def:matrix-rip}
For each integer $r=1,\ldots,n_{\min}$, the isometry constant $\delta_r$ of $\mathcal A$ is the smallest  quantity such that
\begin{equation*}
\label{eq:matrix-RIP}
\left( 1- \delta_r \right) \norm{X}_F^2 \le \norm{\mathcal A\left( X\right) }_2^2 \le \left( 1 +  \delta_r \right) \norm{X}_F^2
\end{equation*}
holds for all matrices $X$ of rank at most $r$.
\end{defn}
An important result (that we use in the proof of Theorem~\ref{thm:matrix-sensing}) is that Gaussian measurement ensembles satisfy the matrix RIP with high probability provided $m \ge C r n_{\max}$.  See, for example,~\cite{candes2011tight} for details.

To obtain an error bound in the matrix sensing case we lower bound the LHS of \eqref{eq:basic-ineq-formal} using the matrix RIP and upper bound the stochastic error (the RHS of \eqref{eq:basic-ineq-formal}) using a covering argument. The following is our main result in the context of matrix setting.
\begin{theorem}
\label{thm:matrix-sensing}
Suppose that we are given measurements as in~\eqref{eq:measure} where all $\mathcal A^t$'s are Gaussian measurement ensembles. Assume that $X^t$ evolves according to~\eqref{eq:V-purterbation} and has rank $r$. Further assume that the measurement noise $z^t$ is i.i.d.\ $\mathcal N\left( 0, \sigma_1^2\right)$ for $1\le t\le d$ and that the perturbation noise $\epsilon^t$ is i.i.d.\ $\mathcal N\left(0,\sigma_2^2\right)$ for $2\le t\le d$. If 
\begin{equation}
m_0 \ge D_1\max\left\{  n_{\max}r\sum_{t=1}^d w_t^2,  n_{\max}  \right\}, 
\end{equation}
where $D_1$ is a fixed positive  constant, then the estimator $\hat{X}^d$ from~\eqref{eq:weighted-matrix-recovery} satisfies
\begin{equation}
\label{eq:sensing-results}
\norm{\Delta^d}_F^2 \le  C_0  \left( \sum_{t=1}^d w_t^2 \sigma_1^2 + \sum_{t = 1}^{d-1}    {\left(d - t\right)} w^2_t \sigma_2^2 \right) n_{\max} r
\end{equation}
with probability at least $P_1 = 1- dC_1\exp\left(-c_1 n_2 \right) $, where $C_0, C_1, c_1$ are positive constants.
\end{theorem}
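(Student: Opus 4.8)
The plan is to feed observation-model-specific estimates into the deterministic inequality of Proposition~\ref{thm:basic-ineq}. Writing $\Delta^d = \hat X^d - X^d$, both $\hat X^d$ and $X^d$ have rank at most $r$, so $\Delta^d$ has rank at most $2r$. I would (i) lower bound the left-hand side of~\eqref{eq:basic-ineq-formal} by $(1-\delta_{2r})\norm{\Delta^d}_F^2$ using a restricted isometry property, and (ii) upper bound the operator norm $\norm{M}_2$ of $M \defeq \sum_t w_t \mathcal A^{t*}(h^t - z^t)$ on the right. Combining the two and cancelling one factor of $\norm{\Delta^d}_F$ gives $\norm{\Delta^d}_F \le \tfrac{2\sqrt{2r}}{1-\delta_{2r}}\norm{M}_2$, so that $\norm{\Delta^d}_F^2 \le \tfrac{8r}{(1-\delta_{2r})^2}\norm{M}_2^2$; inserting the operator-norm bound from step (ii) then yields~\eqref{eq:sensing-results}.

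For step (i), the key observation is that the left-hand side equals $\norm{\bar{\mathcal A}_w(\Delta^d)}_2^2$ for the stacked operator $\bar{\mathcal A}_w(X) = (\sqrt{w_1}\mathcal A^1(X);\dots;\sqrt{w_d}\mathcal A^d(X))$. Thus it suffices to show that $\bar{\mathcal A}_w$ satisfies the matrix RIP of Definition~\ref{def:matrix-rip} at level $2r$ with constant $\delta_{2r}$ bounded away from $1$. For a fixed rank-$2r$ matrix $X$, $\norm{\bar{\mathcal A}_w(X)}_2^2$ is a weighted sum of independent $\chi^2$ variables with mean $\norm{X}_F^2$ and fluctuations of order $\sqrt{\sum_t w_t^2/m_0}\,\norm{X}_F^2$; a Bernstein bound, followed by a union bound over an $\epsilon$-net of the rank-$2r$ ball (of cardinality $e^{C r n_{\max}}$), gives the RIP precisely when $m_0 \ge c\, r\, n_{\max}\sum_t w_t^2$. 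This is the source of the first term in the sample-complexity requirement; the bare $n_{\max}$ term is what makes the net argument for the right-hand side below succeed.

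For step (ii), I would split $M$ into a measurement-noise part $\sum_t w_t \mathcal A^{t*}(z^t)$ and a dynamic part $\sum_t w_t \mathcal A^{t*}(h^t)$, and control $\norm{M}_2 = \sup_{u,v} u^T M v$ by a covering argument over unit vectors $u\in\mathbb R^{n_1}$ and $v\in\mathbb R^{n_2}$. Conditioned on the $z^t$, the first part is a Gaussian matrix whose entries have variance $\sum_t w_t^2\norm{z^t}_2^2/m_0 \approx \sigma_1^2\sum_t w_t^2$, so its operator norm is, up to constants, of order $\sigma_1\sqrt{n_{\max}\sum_t w_t^2}$. For the dynamic part I would first condition on the perturbations: by~\eqref{eq:V-purterbation}, $X^d - X^t = U\sum_{s=t+1}^d \epsilon^s$ has rank at most $r$ and, since $U$ is orthonormal and each $\epsilon^s$ is i.i.d.\ $\mathcal N(0,\sigma_2^2)$, its Frobenius norm concentrates around $\sigma_2\sqrt{(d-t)\,r\,n_2}$. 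Pushing these magnitudes through the same net/Bernstein machinery produces an effective per-time variance of order $(d-t)\sigma_2^2 r n_2/m_0$, which under $m_0\ge c\, r\, n_{\max}\sum_t w_t^2$ contributes a term of order $\sigma_2\sqrt{n_{\max}\sum_{t=1}^{d-1}(d-t) w_t^2}$ to $\norm{M}_2$. Adding the two contributions and squaring reproduces the bracketed factor in~\eqref{eq:sensing-results}, with the extra factor $r$ supplied by the $2\sqrt{2r}$ in~\eqref{eq:basic-ineq-formal}.

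The main obstacle is the dynamic part, where the same ensemble $\mathcal A^t$ appears both inside $h^t = \mathcal A^t(X^d - X^t)$ and in the adjoint $\mathcal A^{t*}$, so that $u^T \mathcal A^{t*}(h^t) v = \sum_i \langle A_i^t, X^d-X^t\rangle\langle A_i^t, uv^T\rangle$ is a \emph{quadratic} form in the measurement matrices rather than a linear one. This forces a sub-exponential (rather than sub-Gaussian) tail analysis---e.g.\ a Hanson--Wright or Bernstein estimate---together with care in decoupling the dependence on $\mathcal A^t$ from the dependence on the perturbations; obtaining the variance-type scaling $\sum_t w_t^2$ rather than the naive $(\sum_t w_t)^2$ hinges on exploiting the independence of the ensembles across $t$ inside the net argument. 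Finally, the union bound over the $d$ time steps in the RIP step, combined with the covering of the product sphere in the right-hand-side estimate, yields the failure probability $d C_1 e^{-c_1 n_2}$ claimed in the statement.
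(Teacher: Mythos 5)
Your proposal follows essentially the same route as the paper: Proposition~\ref{thm:basic-ineq}, a weighted-$\chi^2$ concentration plus covering argument establishing the rank-$2r$ RIP for the stacked operator $\{\sqrt{w_t}\mathcal A^t\}_{t=1}^d$ (the paper's Lemma~\ref{lemma:matrix-RIP}), and a $1/4$-net bound on $\norm{\sum_t w_t\mathcal A^{t*}(h^t-z^t)}_2$ split into a measurement-noise part and a dynamic part (Lemma~\ref{lemma:upper-stoch-sensing}), combined and squared exactly as you describe. The only point of divergence is the quadratic form $\sum_i\langle A_i^t,X^d-X^t\rangle\langle A_i^t,uv^T\rangle$ that you propose to treat by Hanson--Wright/decoupling; the paper instead conditions on the ensembles and exploits the Gaussianity of the perturbations $\epsilon^s$, so that only the conditional variance involving $\norm{U^T\mathcal A^{t*}\mathcal A^t(u_0v_0^T)}_F^2$ must be controlled, which it does via the rank-$1$ RIP and $\chi^2$ concentration --- a difference of technique inside one sub-lemma, not a different approach.
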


If we choose the weights as $w_d=1$ and $w_t=0$ for $1\le t\le d-1$, the bound in Theorem \ref{thm:matrix-sensing} reduces to a bound matching classical (static) matrix recovery results (see, for example, \cite{candes2011tight} Theorem 2.4). Also note that in this case Theorem \ref{thm:matrix-sensing} implies exact recovery when the sample complexity is $O(rn/d)$. In order to help interpret this result for other choices of the weights, we note that for a given set of parameters, we can determine the optimal weights that will minimize this bound.  Towards this end, we define $\kappa := {\sigma_2^2} /{\sigma_1^2}$ and set $p_t = {\left(d - t\right)},1\le t\le d$. Then one can calculate the optimal weights by solving the following quadratic program:
\begin{equation}
\label{eq:optimal_weights}
\left\{w^*_t\right\}_{t=1}^d  =  \mathop{\rm arg~min}_{\sum_t w_t = 1;\ w_t\ge 0  } \sum_{t=1}^d w_t^2 + \sum_{t = 1}^{d-1} p_t \kappa w^2_t .
\end{equation}
Using the method of Lagrange multipliers one can show that~\eqref{eq:optimal_weights} has the analytical solution:
\begin{equation}
\label{eq:optimal_weights_analytical}
w_j^* = \frac{1}{\sum_{i=1}^{d} \frac{1}{1+p_i\kappa}} \frac{1}{1+p_j\kappa},\quad 1\le j\le d.
\end{equation}

A simple special case occurs when $\sigma_2 = 0$.  In this case all $V^t$'s are the same, and the optimal weights go to $w^t = \frac{1}{d}$ for all $t$.  In contrast, when $\sigma_2$ grows large the weights eventually converge to $w_d = 1$ and $w^t = 0$ for all $t \neq d$.  This results in essentially using only $y^d$ to recover $X^d$ and ignoring the rest of the measurements. Combining these, we note that when the $\sigma_2$ is small, we can gain by a factor of approximately $d$ over the na\"{i}ve strategy that ignores dynamics and tries to recover $X^d$ using only $y^d$.  Notice also that the minimum sample complexity is proportional to $\sum_{t=1}^d w_t^2$ when $r/d$ is relatively large.  Thus, when $\sigma_2$ is small, the required number of measurements can be reduced by a factor of $d$ compared to what would be required to recover $X^d$ using only $y^d$.

\subsection{Matrix completion setting}
For the matrix completion problem, we consider the following simple uniform sampling ensemble:
\begin{defn}
A linear operator $\mathcal A:\mathbb R^{n_1\times n_2} \rightarrow \mathbb R^{m}$ is a uniform sampling ensemble (with replacement) if all sensing matrices $A_i$ are i.i.d.\ uniformly distributed on the set
\begin{equation*}
\mathcal X = \left\{ e_j\left(n_1\right) e_k^T\left(n_2\right), 1\le j\le n_1, 1\le k \le n_2 \right),
\end{equation*}
where $e_j\left( n\right)$ are the canonical basis vectors in $\mathbb R^n$. We let $p =  m_0/(n_1n_2)$ denote the fraction of sampled entries.
\end{defn}

For this observation architecture, our analysis is complicated by the fact that it does not satisfy the matrix RIP. (A quick problematic example is a rank-$1$ matrix with only one non-zero entry.) To handle this we follow the typical approach and restrict our focus to matrices that satisfy certain {\em incoherence} properties.
\begin{defn} (Subspace incoherence \cite{hardt2014understanding}) Let $U\in\mathbb R^{n\times r}$ be the orthonormal basis for an $r$-dimensional subspace $\mathcal U$, then the incoherence of $\mathcal U$ is defined as $\mu(\mathcal U) \defeq \max_{i\in [n]}\frac{\sqrt n}{\sqrt r} \norm{ e_i^T U}_2$, where $e_i$ denotes the $i^{\text{th}}$ standard basis vector. We also simply denote $\mu( \textrm{span}(U))$ as $\mu(U)$.
\end{defn}
\begin{defn} (Matrix incoherence \cite{jain2013low}) A rank-$r$ matrix $X\in \mathbb R^{n_1\times n_2}$ with SVD $X = U \Sigma V^T$ is incoherent with parameter $\mu$ if
\begin{equation*}
\norm{U_{:i}}_2 \le \frac{\mu \sqrt{r}}{\sqrt{n_1}} \quad \textrm{for any } i\in [n_1]\quad \textrm{and} \quad \norm{V_{:j}}_2 \le \frac{\mu \sqrt{r}}{\sqrt{n_2}} \quad \textrm{for any } j\in [n_2],
\end{equation*}
i.e., the subspaces spanned by the columns of $U$ and $V$ are both $\mu$-incoherent.
\end{defn}

The incoherence assumption guarantees that $X$ is far from sparse, which make it possible to recover $X$ from incomplete measurements since a measurement contains roughly the same amount of information for all dimensions.

To proceed we also assume that the matrix $X^d$ has ``bounded spikiness'' in that the maximum entry of $X^d$ is bounded by $a$, i.e.,$\norm{X^d}_\infty\le a$. To exploit the spikiness constraint below we replace the optimization constraints $\mathbb C \left( r \right)$ in \eqref{eq:weighted-matrix-recovery} with $ \mathbb C\left( r, a\right)\defeq = \left\{ X\in \mathbb R^{n_1\times n_2}: \textrm{rank}\left(X \right) \le r,\norm{X}_\infty \le a \right\} $:
\begin{equation}
\label{eq:weighted-matrix-recovery-completion}
\hat X^d =  \mathop{\rm arg~min}_{X\in \mathbb C(r,a)} \mathcal L\left(X\right) = \mathop{\rm arg~min}_{X\in \mathbb C(r,a)}  \frac{1}{2}\sum_{t=1}^{d} w_t \norm {\mathcal A^t\left( X \right) - y^t }_2^2.
\end{equation}
Note that Proposition~\ref{thm:basic-ineq} still holds for~\eqref{eq:weighted-matrix-recovery-completion}.

To obtain an error bound in the matrix completion case, we lower bound the LHS of \ref{eq:basic-ineq-formal} using a restricted convexity argument (see, for example,~\cite{negahban2012restricted}) and upper bound the RHS using matrix Bernstein inequality. The result of this approach is the following theorem.
\begin{theorem}
\label{thm:matrix-completion}
Suppose that we are given measurements as in~\eqref{eq:measure} where all $\mathcal A^t$'s are uniform sampling ensembles. Assume that $X^t$ evolves according to~\eqref{eq:V-purterbation}, has rank $r$, and is incoherent with parameter $\mu_0$ and  $\norm{X^d}_\infty \le a$. Further assume that the perturbation noise and the measurement noise satisfy the same assumptions in Theorem~\ref{thm:matrix-sensing}. If 
\begin{equation}
m_0 \ge D_2  n_{\min}\log^2 (n_1 + n_2)\phi'(w), 
\end{equation}
where $\phi'(w) =  \frac{ \max_{t} w_t^2 \left( (d-t)\mu_0^2 r\sigma_2^2/n_1  + \sigma_1^2\right)} {\sum_{t=1}^d w_t^2 \left( (d-t)\sigma_2^2  + \sigma_1^2 \right) }$, then the estimator $\hat{X}^d$ from~\eqref{eq:weighted-matrix-recovery-completion} satisfies
\begin{equation}
\label{eq:bound-completion}
\norm{\Delta^d}_F^2 \le \max\left\{  B_1 \defeq C_2 a^2 n_1n_2\sqrt{\frac{\sum_{t=1}^d w_t^2\log(n_1+n_2)}{m_0}}, B_2 \right\},
\end{equation}
with probability at least $P_1 = 1-5/(n_1+n_2) - 5dn_{\max} \exp(-n_{\min})$, where
\begin{equation}
\label{eq:B2}
B_2 = \frac{C_3 rn_1^2n_2^2\log(n_1+n_2)}{n_{\min}m_0} \left (  \left( \sum_{t=1}^d  w_t^2 \sigma_1^2 + \sum_{t=1}^{d-1} (d-t) w_t^2 \sigma_2^2\right)   +  \sum_{t=1}^d w_t^2 a^2\right),
\end{equation}
and $C_2, C_3, D_2$ are absolute positive constants.
\end{theorem}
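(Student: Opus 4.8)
The plan is to instantiate the deterministic bound of Proposition~\ref{thm:basic-ineq} for the uniform sampling ensemble by (i) lower bounding the left-hand side of~\eqref{eq:basic-ineq-formal} through a restricted strong convexity (RSC) argument and (ii) upper bounding the operator norm on the right-hand side via the matrix Bernstein inequality. The characteristic $\max\{B_1,B_2\}$ form of~\eqref{eq:bound-completion} will emerge from a dichotomy in the RSC step: either $\Delta^d$ is small enough that the RSC lower bound is vacuous, in which case $\norm{\Delta^d}_F^2 \le B_1$ directly; or the RSC bound is active, in which case combining it with the Bernstein estimate yields $\norm{\Delta^d}_F^2 \le B_2$.

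First I would record the structure of the error direction. Since both $\hat X^d$ and $X^d$ lie in $\mathbb C(r,a)$, the residual $\Delta^d$ has rank at most $2r$ and satisfies $\norm{\Delta^d}_\infty \le 2a$; in particular $\norm{\Delta^d}_* \le \sqrt{2r}\,\norm{\Delta^d}_F$. For such matrices I would invoke a restricted strong convexity inequality in the spirit of~\cite{negahban2012restricted}, adapted to the weighted quadratic form $\sum_t w_t \norm{\mathcal A^t(\cdot)}_2^2$. Because the $\mathcal A^t$ are independent uniform sampling ensembles with $\sum_t w_t = 1$, the expectation of this form equals $\tfrac{m_0}{n_1 n_2}\norm{\Delta^d}_F^2$, so the target is $\sum_t w_t\norm{\mathcal A^t(\Delta^d)}_2^2 \ge \tfrac{m_0}{2 n_1 n_2}\norm{\Delta^d}_F^2$. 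A discretization and peeling argument shows this holds uniformly over the restricted set with high probability, up to a correction term proportional to the spikiness $a^2$; balancing this correction against the main term produces precisely the threshold $B_1$ below which the lower bound is uninformative.

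Next I would bound the stochastic term $\norm{\sum_t w_t \mathcal A^{t*}(h^t - z^t)}_2$. Writing $h^t = \mathcal A^t(X^d - X^t)$ and using the dynamics $X^d - X^t = U\sum_{s=t+1}^d \epsilon^s$, this operator splits into a measurement-noise piece $\sum_t w_t \mathcal A^{t*}(z^t)$ and a dynamic-mismatch piece $\sum_t w_t \mathcal A^{t*}\mathcal A^t(X^d - X^t)$, both sums of independent, zero-mean rank-one random matrices. Matrix Bernstein then applies after conditioning on the (independent) perturbations. The governing quantities are the matrix variance statistic and the per-summand operator-norm bound: the variance contributes, up to absolute constants, $\tfrac{m_0}{n_{\min}}\sum_t w_t^2\left(\sigma_1^2 + (d-t)\sigma_2^2\right)$, while incoherence controls the entrywise size of the mismatch through $\norm{U_{j:}}_2^2 \le \mu_0^2 r/n_1$ and spikiness controls $\norm{X^d - X^t}_\infty$, together feeding the $R\log(n_1+n_2)$ term. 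The sample-complexity hypothesis $m_0 \ge D_2 n_{\min}\log^2(n_1+n_2)\phi'(w)$ is exactly the condition forcing the variance term to dominate this $R\log$ term, so that $\norm{\sum_t w_t \mathcal A^{t*}(h^t - z^t)}_2^2$ is at most a constant times $\tfrac{m_0 \log(n_1+n_2)}{n_{\min}}\left(\sum_t w_t^2\sigma_1^2 + \sum_t w_t^2(d-t)\sigma_2^2 + \sum_t w_t^2 a^2\right)$.

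Finally I would combine the two estimates. On the event that the RSC bound is active,~\eqref{eq:basic-ineq-formal} gives $\tfrac{m_0}{2 n_1 n_2}\norm{\Delta^d}_F^2 \le 2\sqrt{2r}\,\norm{\sum_t w_t \mathcal A^{t*}(h^t - z^t)}_2\,\norm{\Delta^d}_F$; cancelling one factor of $\norm{\Delta^d}_F$, squaring, and substituting the Bernstein bound yields the expression $B_2$ in~\eqref{eq:B2}, with the prefactor $\tfrac{r n_1^2 n_2^2 \log(n_1+n_2)}{n_{\min} m_0}$ arising as $\big(\tfrac{\sqrt r\, n_1 n_2}{m_0}\big)^2$ times $\tfrac{m_0 \log(n_1+n_2)}{n_{\min}}$. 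Taking the maximum with the RSC-failure bound $B_1$ gives~\eqref{eq:bound-completion}, and a union bound over the two Bernstein events and the $d$ sub-Gaussian tail events controlling the perturbations accounts for the stated probability $P_1$. The main obstacle I anticipate is the RSC step: unlike in Theorem~\ref{thm:matrix-sensing}, the weighted sampling form does not satisfy the matrix RIP, so establishing a uniform lower bound over the spiky low-rank cone---with the correct $a^2$ correction and the right dependence on $\sum_t w_t^2$---requires a careful peeling argument that simultaneously tracks the weights, the incoherence parameter, and the spikiness constraint.
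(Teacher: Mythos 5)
Your overall architecture is the same as the paper's: a restricted-strong-convexity lower bound established by a peeling argument, a matrix Bernstein bound on $\norm{\sum_t w_t\mathcal A^{t*}(h^t-z^t)}_2$ (with the sample-complexity hypothesis arising exactly as you say, to make the variance statistic dominate the $L\log(n_1+n_2)$ term), and a dichotomy producing $\max\{B_1,B_2\}$. However, your accounting of where the spikiness parameter $a$ enters is wrong in two places, and the two errors do not cancel if you actually execute the plan. First, $B_1$ does not come from ``balancing the RSC correction against the main term.'' In the paper the restricted set is defined with an explicit Frobenius-norm floor, $\norm{X}_F^2 \ge n_1n_2\sqrt{64\sum_t w_t^2\log(n_1+n_2)/(\log(6/5)m_0)}$, which is needed so that the peeling argument has summable failure probabilities; Case I (the normalized error falling below this floor) is what yields $B_1$, with its characteristic $1/\sqrt{m_0}$ scaling and no factor of $r$. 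The additive RSC correction $\frac{44rn_1n_2}{m_0}(\mathbb E\norm{\Sigma_R})^2$, multiplied by $\norm{X^d}_\infty^2$ when the lemma is applied to $\Delta^d/(2a)$, survives into Case II and is precisely what produces the $\sum_t w_t^2 a^2$ term inside $B_2$ (via $\mathbb E\norm{\Sigma_R}\lesssim\sqrt{\log(n_1+n_2)\sum_t w_t^2\,m_0/n_{\min}}$). If you balance that correction away to define your threshold, you get the wrong $B_1$ and you lose the term you later need in $B_2$.

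Second, your proposed source for the $a^2$ term in $B_2$ --- that ``spikiness controls $\norm{X^d-X^t}_\infty$'' inside the Bernstein bound --- does not hold. The mismatch $X^d-X^t = U\bigl(\sum_{s=t+1}^d\epsilon^s\bigr)^T$ is a Gaussian perturbation whose entries are controlled by the incoherence of $U$ and by $\sigma_2$ (variance at most $(d-t)\mu_0^2 r\sigma_2^2/n_1$ per entry), not by the bound $a$ on $\norm{X^d}_\infty$; accordingly, the paper's bound on the stochastic term contains only $\sigma_1^2$ and $(d-t)\sigma_2^2$ contributions and no $a^2$. So a correct execution of your Bernstein step would produce a stochastic bound without the $a^2$ term, and --- having already misassigned the RSC correction to $B_1$ --- your $B_2$ would be missing its $\sum_t w_t^2 a^2$ component. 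The fix is to keep the RSC correction term as an additive error in Case II (absorbing the cross term by Young's inequality, as in the paper) and to let $B_1$ come from the Frobenius-norm floor of the restricted set rather than from the correction term.
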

If we choose the weights as $w_d=1$ and $w_t=0$ for $1\le t\le d-1$, the bound in Theorem \ref{thm:matrix-completion} reduces to a result comparable to classical (static) matrix completion results (see, for example, \cite{klopp2014noisy} Theorem 7).
Moreover, from the $B_2$ term in~\eqref{eq:bound-completion}, we obtain the same dependence on $m$  as that of  \eqref{eq:sensing-results}, i.e., $1/m$.
However, there are also a few key differences between Theorem~\ref{thm:matrix-sensing} and our results for matrix completion. In general the bound is loose in several aspects compared to the matrix sensing bound. For example, when $m_0$ is small, $B_1$ actually dominates, in which case the dependence on $m$ is actually $1/\sqrt{m}$ instead of $1/m$. When $m_0$ is sufficiently large, then $B_2$ dominates, in which case we can consider two cases. The first case corresponds to  when $a$ is relatively large compared to $\sigma_1,\sigma_2$ -- i.e., the low-rank matrix is spiky. In this case the term containing $a^2$ in $B_2$ dominates, and the optimal weights are equal weights of $1/d$. This occurs because the term involving $a$ dominates and there is little improvement to be obtained by exploiting temporal dynamics. In the second case, when $a$ is relatively small compared to $\sigma_1, \sigma_2$ (which is usually the case in practice), the bound can be simplified to
\begin{equation*}
\norm{\Delta}_F^2 \le \frac{c_3 rn_1^2n_2^2\log(n_1+n_2)}{n_{\min}m_0} \left (  \left( \sum_{t=1}^d  w_t^2 \sigma_1^2 + \sum_{t=1}^{d-1} (d-t) w_t^2  \sigma_2^2\right)\right).
\end{equation*}
The above bound is much more similar to the bound in \eqref{eq:sensing-results} from Theorem \ref{thm:matrix-sensing}. In fact, we can also obtain the optimal weights by solving the same quadratic program as \eqref{eq:optimal_weights}.

When $n_1\approx n_2$, the sample complexity is $\Theta(n_{\min}\log^2 (n_1 + n_2)\phi'(w))$. In this case Theorem \ref{thm:matrix-completion} also implies a similar sample complexity reduction as we observed in the matrix sensing setting. However, the precise relations between sample complexity and weights $w_t$'s are different in these two cases (deriving from the fact that the proof uses matrix Bernstein inequalities in the matrix completion setting rather than concentration inequalities of Chi-squared variables as in the matrix sensing setting).

\section{An algorithm based on alternating minimization}
\label{sec:alg}
As noted in Section~\ref{sec:prob}, any rank-$r$ matrix can be factorized as $X = UV^T$ where $U$ is $n_1 \times r$ and $V$ is $n_2 \times r$, therefore the LOWEMS estimator in~\eqref{eq:weighted-matrix-recovery} can be reformulated as
\begin{equation}
\label{eq:factorized-rec}
\hat X^d =  \mathop{\rm arg~min}_{X\in\mathcal C(r)} \mathcal L\left(X\right) = \mathop{\rm arg~min}_{X=UV^T} \sum_{t=1}^{d} \frac{1}{2}w_t \norm {\mathcal A^t\left( UV^T \right) - y^t }_2^2.
\end{equation}

The above program can be solved by alternating minimization (see \cite{koren2010collaborative}), which alternatively minimizes the objective function over $U$ (or $V$) while holding $V$ (or $U$) fixed until a stopping criterion is reached. Since the objective function is quadratic, each step in this procedure reduces to conventional weighted least squares, which can be solved via efficient numerical procedures. Theoretical guarantees for global convergence of alternating minimization for the static matrix sensing/completion problem have recently been established in~\cite{hardt2014understanding, jain2013low,zhao2015nonconvex} by treating the alternating minimization as a noisy version of the power method. Extending these results to establish convergence guarantees for~\eqref{eq:factorized-rec} would involve analyzing a weighted power method. We leave this analysis for future work, but expect that similar convergence guarantees should be possible in this setting.

\section{Simulations and experiments}
\label{sec:exp}
\subsection{Synthetic simulations}
Our synthetic simulations consider both matrix sensing and matrix completion, but with an emphasis on matrix completion. We set $n_1 = 100$, $n_2 = 50$, $d = 4$ and $r=5$. We consider two baselines: {\bf baseline one} is only using $y^d$ to recover $X^d$ and simply ignoring $y^1, \ldots y^{d-1}$; {\bf baseline two} is using $\{y^t\}_{t=1}^d$ with equal weights.  Note that both of these can be viewed as special cases of LOWEMS with weights $(0, \ldots, 0, 1)$ and $(\frac{1}{d}, \frac{1}{d}, \ldots, \frac{1}{d})$ respectively. Recalling the formula for the optimal choice of weights in~\eqref{eq:optimal_weights_analytical}, it is easy to show that baseline one is equivalent to the case where $\kappa  = ( \sigma_2^2 )/(\sigma_1^2) \rightarrow \infty$ and the baseline two equivalent to the case where $\kappa \rightarrow 0$. This also makes intuitive sense since $\kappa \rightarrow \infty$ means the perturbation is arbitrarily large between time steps, while $\kappa \rightarrow 0$ reduces to the static setting.

\emph{1). Recovery error.} \
In this simulation, we set $m_0 = 4000$ 
and set the measurement noise level $\sigma_1$ to $0.05$. We vary the perturbation noise level $\sigma_2$. For every pair of $\left( \sigma_1, \sigma_2\right)$ we perform $10$ trials, and show the average relative recovery error ${\norm{\Delta^d}_F^2} / {\norm{X^d}_F^2}$. Figure \ref{fig:rec_err} illustrates how LOWEMS reduces the recovery error compared to our baselines. As one can see, when $\sigma_2$ is small, the optimal $\kappa$, i.e., $\sigma_2^2 /\sigma_1^2$, generates nearly equal weights (baseline two), reducing recovery error approximately by a factor of $4$ over baseline one, which is roughly equal to $d$ as expected. As $\sigma_2$ grows, the recovery error of baseline two will increase dramatically due to the perturbation noise. However in this case the optimal $\kappa$ of LOWEMS grows with it, leading to a more uneven weighting and to somewhat diminished performance gains. We also note that, as expected, LOWEMS converges to baseline one when $\sigma_2$ is large.

\begin{figure}[t]
	\centering

	\begin{subfigure}[c]{0.03\textwidth}
		\caption{}
	\end{subfigure}%
	\begin{minipage}[c]{0.47\textwidth}
		\includegraphics[width=1\linewidth]{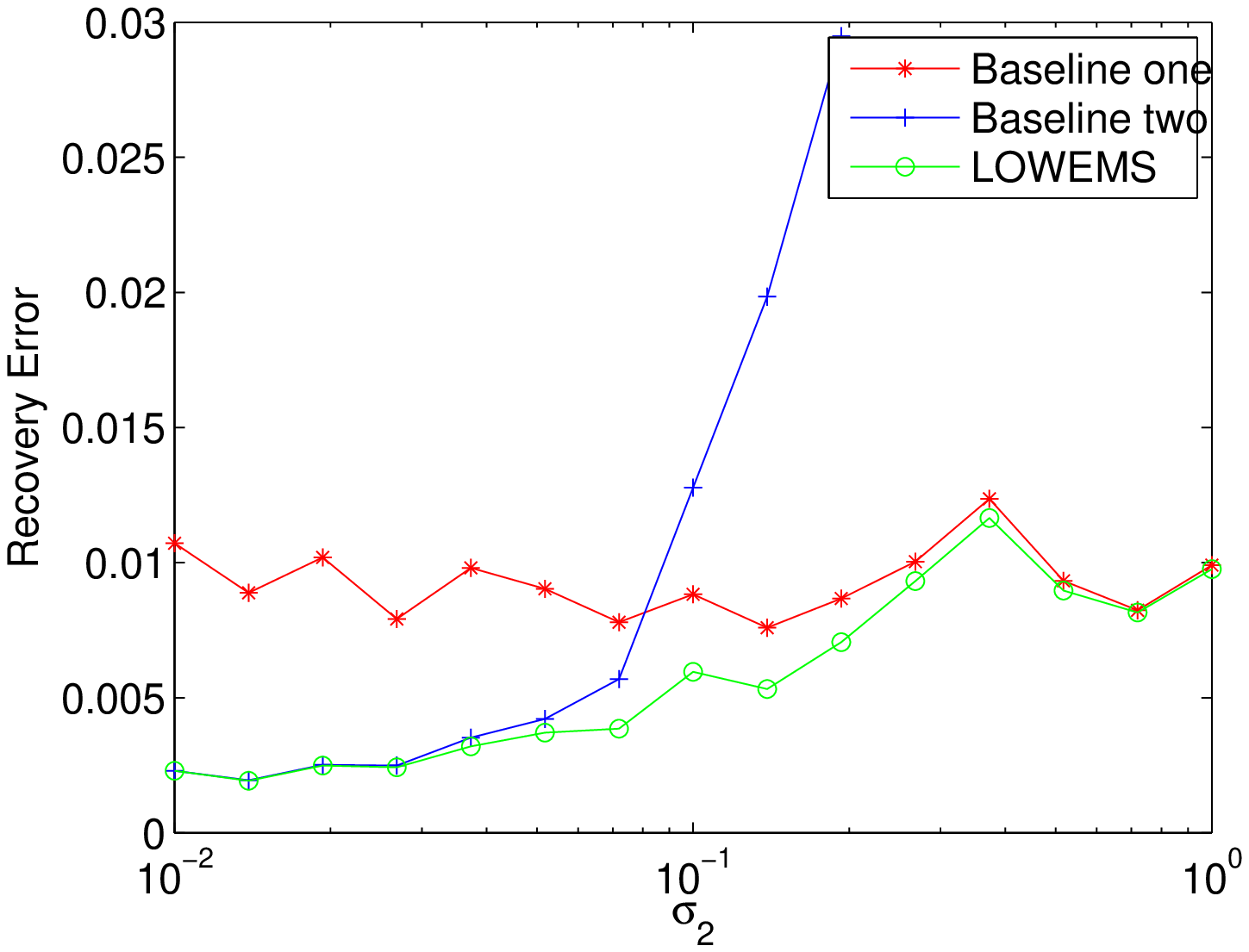}
	\end{minipage}%
	\begin{subfigure}[c]{0.03\textwidth}
		\caption{}
	\end{subfigure}%
	\begin{minipage}[c]{0.47\textwidth}
		\includegraphics[width=1\linewidth]{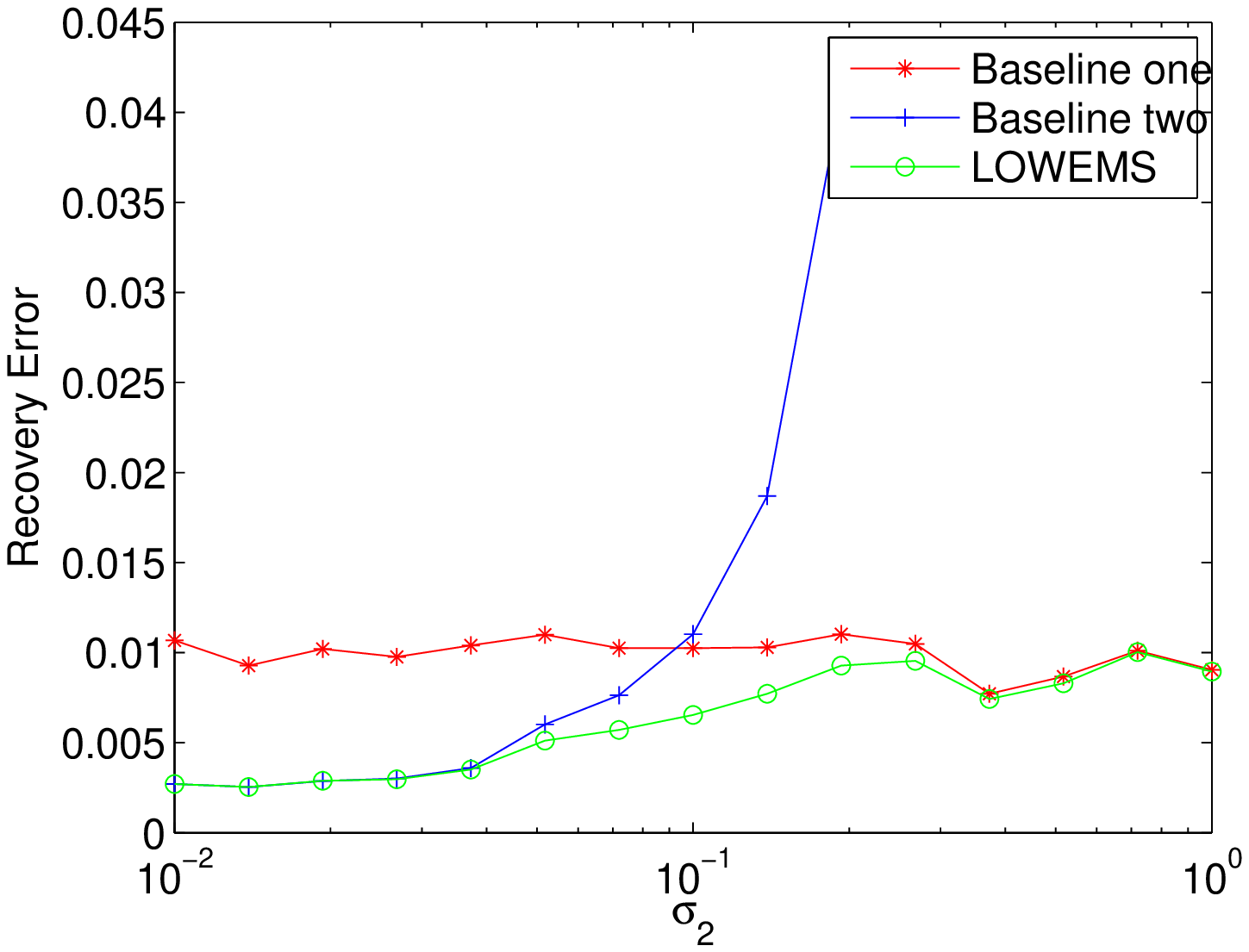}
	\end{minipage}
	\caption{Recovery error under different levels of perturbation noise. (a) matrix sensing. (b) matrix completion.}%
	\label{fig:rec_err}%

\end{figure}
\emph{ 2). Sample complexity.} \
In the interest of conciseness we only provide results here for the matrix completion setting (matrix sensing yields broadly similar results). In this simulation we vary the fraction of observed entries $p$ to empirically find the minimum sample complexity required to guarantee successful recovery (defined as a relative error $\le0.04$). We compare the sample complexity of the proposed LOWEMS to baseline one and baseline two under different perturbation noise level $\sigma_2$. For fixed $\sigma_2$, the relative recovery error is the averaged over $10$ trials. Figure \ref{fig:sample} illustrates how LOWEMS reduces the sample complexity required to guarantee successful recovery. When the perturbation noise is weaker than the measurement noise, the sample complexity can be reduced approximately by a factor of $d$ compared to baseline one. When the perturbation noise is much stronger than measurement noise, the recovery error of baseline two will increase due to the perturbation noise and hence the sample complexity increase rapidly. However in this case proposed LOWEMS still achieves relatively small sample complexity and its sample complexity converges to baseline one when $\sigma_2$ is relatively large.

\begin{figure}[tbp]
    \centering
\includegraphics[width=0.48\linewidth]{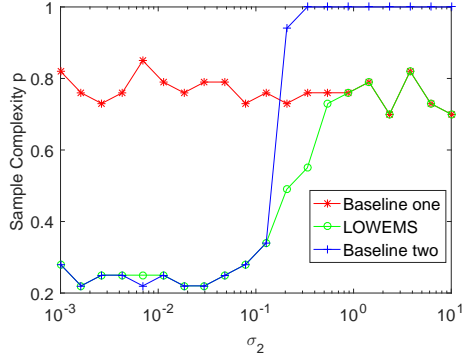} %
    \caption{Sample complexity under different levels of perturbation noise (matrix completion). }%
    \label{fig:sample}%
\end{figure}

\subsection{Real world experiments}
We next test the LOWEMS approach in the context of a recommendation system using the (truncated) Netflix dataset. We eliminate those movies with few ratings, and those users rating few movies, and generate a truncated dataset with $3199$ users, $1042$ movies, $2462840$ ratings, and hence the fraction of visible entries in the rating matrix is $\approx 0.74$. All the ratings are distributed over a period of $2191$ days.
For the sake of robustness, we additionally impose a Frobenius norm penalty on the factor matrices $U$ and $V$ in~\eqref{eq:factorized-rec}. We keep the latest (in time) $10\%$ of the ratings as a testing set. The remaining ratings are split into a validation set and a training set for the purpose of cross validation. We divide the remaining ratings into $d \in \{1, 3, 6, 8\}$ bins respectively with same time period according to their timestamps. We use $5$-fold cross validation, and we keep $1/5$ of the ratings from the $d^{\text{th}}$ bin as a validation set. The number of latent factors $r$ is set to $10$. The Frobenius norm regularization parameter $\gamma$ is set to $1$. We also note that in practice one likely has no prior information on $\sigma_1$, $\sigma_2$ and hence $\kappa$. However, we use model selection techniques like cross validation to select the best $\kappa$ incorporating the unknown prior information on measurement/perturbation noise.
We use root mean squared error (RMSE) to measure prediction accuracy. Since alternating minimization uses a random initialization, we generate $10$ test RMSE's (using a boxplot) for the same testing set. Figure~\ref{fig:netflix}(a) shows that the proposed LOWEMS estimator improves the testing RMSE significantly with appropriate $\kappa$. Additionally, the performance improvement increases as $d$ gets larger.

To further investigate how the parameter $\kappa$ affects accuracy, we also show the validation RMSE compared to $\kappa$ in Figure~\ref{fig:netflix}(b). When $\kappa \approx 1$,  LOWEMS achieves the best RMSE on the validation data. This further demonstrates that imposing an appropriate dynamic constraint should improve recovery accuracy in practice.

\begin{figure}[t]
	\centering

	\begin{subfigure}[c]{0.03\textwidth}
		\caption{}
	\end{subfigure}%
	\begin{minipage}[c]{0.47\textwidth}
		\includegraphics[width=1\linewidth]{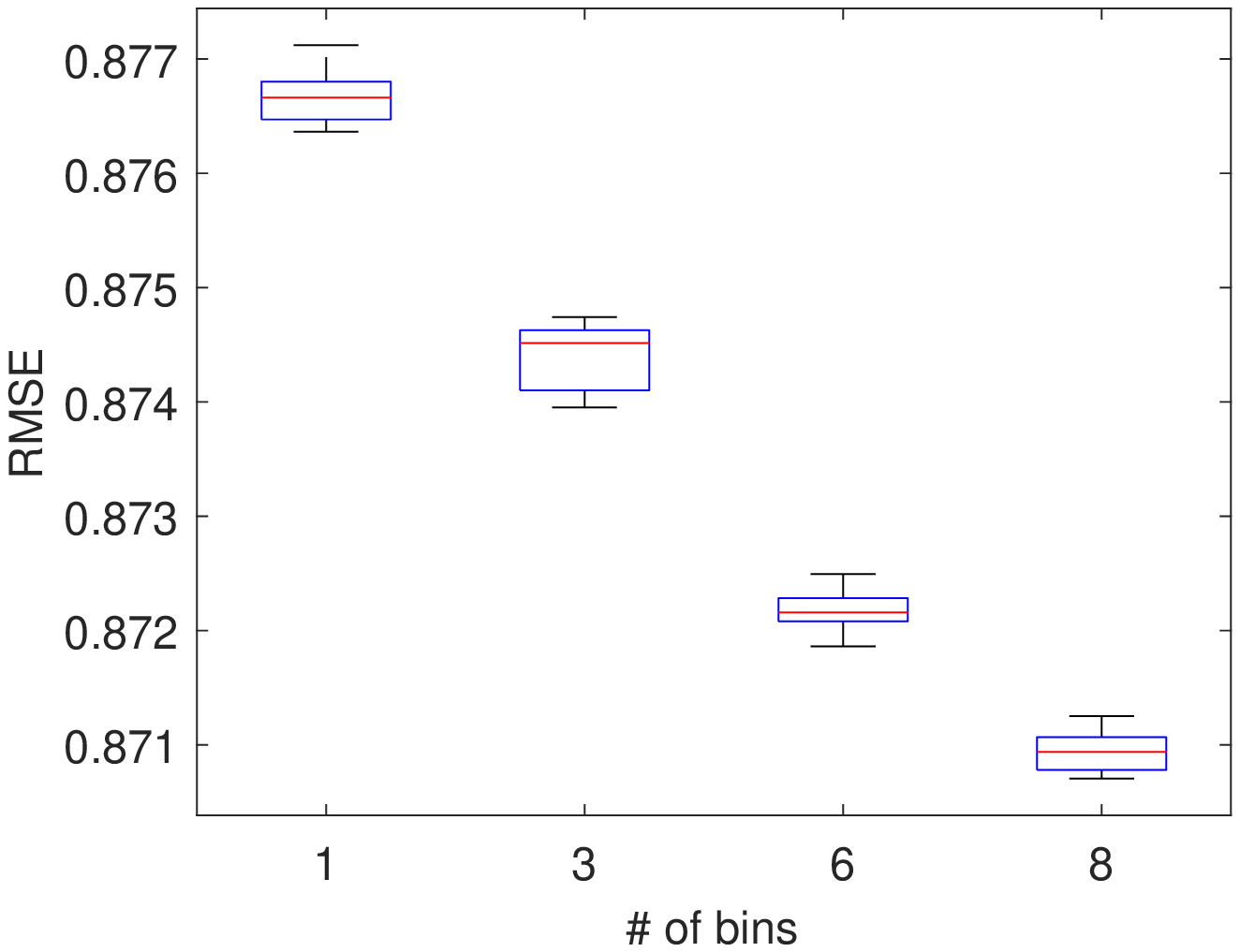}
	\end{minipage}%
	\begin{subfigure}[c]{0.03\textwidth}
		\caption{}
	\end{subfigure}%
	\begin{minipage}[c]{0.47\textwidth}
		\includegraphics[width=1\linewidth]{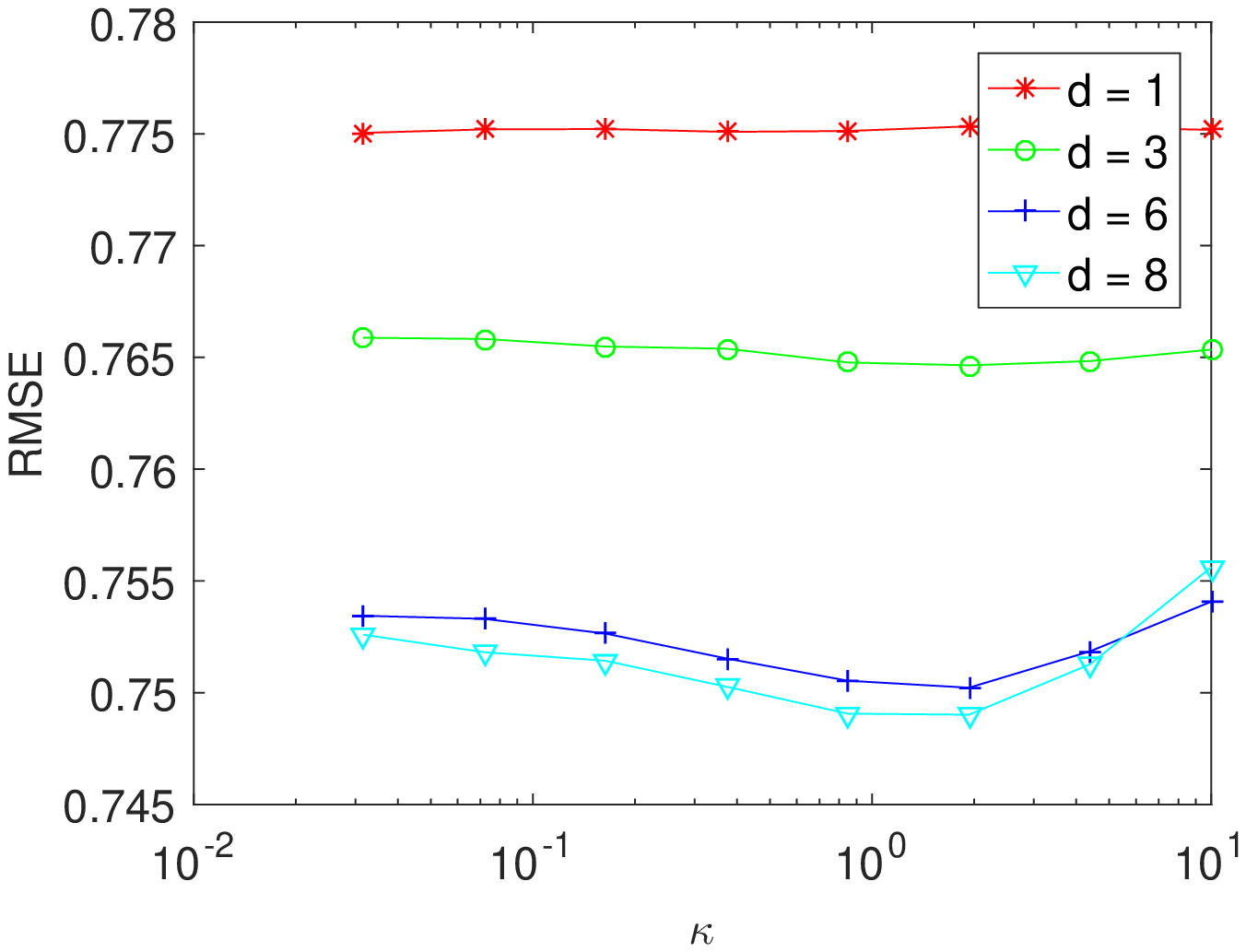}
	\end{minipage}
	\caption{Experimental results on truncated Netflix dataset. (a) Testing RMSE vs.\ number of time steps. (b) 	Validation RMSE vs.\ $\kappa$.}%
	\label{fig:netflix}%

\end{figure}

\section{Conclusion}
In this paper we consider the low-rank matrix recovery problem in a novel setting, where one of the factor matrices changes over time. We propose the locally weighted matrix smoothing (LOWEMS) framework, and have established error bounds for LOWEMS in both the matrix sensing and matrix completion cases. Our analysis quantifies how the proposed estimator improves recovery accuracy and reduces sample complexity compared to static recovery methods. Finally, we provide both synthetic and real world experimental results to verify our analysis and demonstrate superior empirical performance when exploiting dynamic constraints in a recommendation system.

\bibliographystyle{abbrv}
\begin{small}
 \bibliography{refs}
\end{small}

\newpage

\appendix
\section{Proof of Proposition \ref{thm:basic-ineq} }
\begin{proof}
Let $x \defeq \textrm{vec}\left( X\right) \in \mathbb R^{n_1n_2}$ and $\tilde{\mathcal L} \left( x\right)  \defeq \mathcal L\left( X\right)$. Since the objective function is continuous in $X$ and the set $\mathbb C\left(r\right)$ is compact, $\mathcal L\left( X\right)$ achieves a minimizer at some point $\hat{X}^d \in \mathbb C\left(r\right)$.

Since $\hat{X}^d$ is a minimizer of the constrained problem, then for all matrices $X \in \mathbb C \left(r\right)$ we have the following inequality
\begin{equation}
\label{eq:basic-ineq}
\tilde{\mathcal L}\left( \hat{x}^d \right) - \tilde{\mathcal L}\left( x \right)\le 0.
\end{equation}
By the second-order Taylor's theorem, we expand $\tilde{\mathcal L}\left(x\right)$ around $x^d= \textrm{vec}\left(X^d\right)$
\begin{equation}
\label{eq:tylor}
\tilde{\mathcal L}\left(x\right) = \tilde{\mathcal L}\left(x^d\right) + \left\langle \nabla \tilde{\mathcal L}\left(x^d\right), x - x^d \right\rangle + \frac{1}{2} \left\langle \nabla^2 \tilde{\mathcal L}\left(\bar x\right)\left( x - x^d\right), x - x^d \right\rangle,
\end{equation}
where $\bar x = \alpha x^d  + \left(1-\alpha \right) x$ for some $\alpha \in \left[0,1\right]$. Plugging \eqref{eq:tylor} with $x = \hat{x}^d$ into \eqref{eq:basic-ineq} we obtain
\begin{equation}
\label{eq:basic-ineq2}
 \left\langle \nabla \tilde{\mathcal L}\left(x^d\right), \hat{x}^d - x^d \right\rangle + \frac{1}{2} \left\langle \nabla^2 \tilde{\mathcal L}\left(\bar x\right)\left( \hat{x}^d - x^d\right), \hat{x}^d - x^d \right\rangle \le 0.
\end{equation}

Through some algebraic manipulation we have the following expression for the gradient of $\tilde{\mathcal L}\left(x\right)$:
\begin{equation}
\label{eq:gradient}
\nabla \tilde{\mathcal L}\left(x\right) =  \textrm{vec}\left( \sum_{t=1}^{d} w_t \mathcal A^{t*} \left[\mathcal A^t\left( X\right) - y^t  \right] \right).
\end{equation}
Based on the above gradient it follows that
\begin{equation}
\label{eq:hessian}
\nabla^2 \tilde{\mathcal L}\left(x\right) b =   \textrm{vec}\left( \sum_{t=1}^{d} w_t \mathcal A^{t*} \left[\mathcal A^t\left( B\right)  \right] \right),
\end{equation}
where $b=\textrm{vec}\left( B\right)$.

Now based on \eqref{eq:gradient} and \eqref{eq:hessian}, the absolute value of first term in \eqref{eq:basic-ineq2} can be bounded as
\begin{equation}
\label{eq:inner-prod}
\begin{split}
\left| \langle \nabla \tilde{\mathcal L}\left(x^d\right), \hat{x}^d - x^d\rangle \right|
& = \left|  \left\langle \sum_{t=1}^{d} w_t \mathcal A^{t*} \left[\mathcal A^t\left( X^d\right) - y^t  \right] ,\Delta^d  \right\rangle \right|\\
&\le \norm{ \sum_{t=1}^{d} w_t \mathcal A^{t*} \left[\mathcal A^t\left( X^d\right) - y^t  \right]}_2 \norm {\Delta^d  }_* \\
&\le \norm{ \sum_{t=1}^{d} w_t \mathcal A^{t*} \left(h^t - z^t \right)}_2 \sqrt{2r}\norm {\Delta^d  }_F \\
\end{split}
\end{equation}
The first inequality above used the trace dual norm inequality, while the second inequality follows from a basic inequality for rank-$2r$ matrices. Similarly the second term in \eqref{eq:basic-ineq2} is
\begin{equation}
\label{eq:second-order}
\begin{split}
\frac{1}{2} \left\langle \nabla^2 \tilde{\mathcal L}\left(\bar x\right)\left( \hat{x}^d - x^d\right), \hat{x}^d - x^d \right\rangle &= \frac{1}{2}\left\langle \sum_{t=1}^{d} w_t \mathcal A^{t*} \mathcal A^t\left( \Delta^d\right)   ,\Delta^d  \right\rangle \\
& =  \frac{1}{2} \sum_{t=1}^{d} w_t \left\langle  \mathcal A^t\left( \Delta^d\right)  , \mathcal A^{t} \left(\Delta^d  \right) \right\rangle.
\end{split}
\end{equation}
The result follows from combining \eqref{eq:inner-prod} and \eqref{eq:second-order}. Note that the above proof holds if we replace $\mathbb  C \left( r,\right)$ with $\mathbb C\left( r, a \right)$, which completes our proof.
\end{proof}
\section{Proof of Theorem \ref{thm:matrix-sensing} }
\begin{proof}
The proof consists of lower bounding the LHS  of \eqref{eq:basic-ineq-formal} and upper bounding the RHS of \eqref{eq:basic-ineq-formal}.

We use the following lemma to lower bound $\sum_{t=1}^d w_t \norm{\mathcal A^t\left( \Delta^d\right)}_2^2$.
\begin{lemma}
\label{lemma:matrix-RIP}
Suppose the linear operator $\mathcal A^t:\mathbb R^{n_1\times n_2}\rightarrow \mathbb R^{m_0}$ is random Gaussian ensemble for all $ 1\le t\le d$. If $m_0>Dn_{\max}r\sum_{t=1}^d w_t^2$, the composite operator $\left\{ \sqrt{w_t}\mathcal A^t\right\}_{t=1}^d$ satisfies the rank-$2r$ matrix RIP with constant $\delta_{2r}\le \delta$ with probability exceeding $1-C\exp\left( -cm_0\right)$, where $D, C$ and $c$ (which depends on $\sigma$) are absolute positive constants.
\end{lemma}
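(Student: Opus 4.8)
The plan is to show that the composite operator $\bar{\mathcal A}_w$ defined by $\bar{\mathcal A}_w(X) \defeq \left(\sqrt{w_1}\mathcal A^1(X),\ldots,\sqrt{w_d}\mathcal A^d(X)\right)$, which satisfies $\norm{\bar{\mathcal A}_w(X)}_2^2 = \sum_{t=1}^d w_t\norm{\mathcal A^t(X)}_2^2$, concentrates around $\norm{X}_F^2$ uniformly over all matrices of rank at most $2r$. By homogeneity it suffices to establish the two-sided bound on the unit sphere $S_{2r} \defeq \{X:\textrm{rank}(X)\le 2r,\ \norm{X}_F=1\}$. Note that $\bar{\mathcal A}_w$ is \emph{not} itself a Gaussian measurement ensemble in the sense of Definition~\ref{def:gaussian-ensemble}, because its $d$ blocks carry unequal variances $w_t/m_0$; hence the standard RIP result for Gaussian ensembles cannot be invoked directly and the weighting must be tracked by hand.

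First I would establish pointwise concentration. For a fixed $X$ with $\norm{X}_F=1$, each inner product $\langle A_i^t, X\rangle$ is $\mathcal N(0,1/m_0)$ and these are independent across $i$ and $t$, so $\norm{\bar{\mathcal A}_w(X)}_2^2 = \frac{1}{m_0}\sum_{t=1}^d\sum_{i=1}^{m_0} w_t g_{t,i}^2$ with $g_{t,i}$ i.i.d.\ standard normal. This is a weighted sum of independent $\chi^2_1$ variables with mean $\sum_t w_t = 1$. Applying a Laurent--Massart / Bernstein tail bound for weighted chi-squared sums, whose sub-Gaussian parameter is the sum of squared coefficients $\sum_t w_t^2/m_0$ and whose sub-exponential parameter is $\max_t w_t/m_0$, yields, for $\delta\in(0,1)$,
\begin{equation*}
\mathbb P\!\left( \left| \norm{\bar{\mathcal A}_w(X)}_2^2 - 1 \right| > \delta \right) \le 2\exp\!\left( -c\, m_0 \delta^2 \Big/ \textstyle\sum_{t=1}^d w_t^2 \right).
\end{equation*}

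Next I would upgrade this to a uniform statement by a covering argument. Using the standard bound (see~\cite{candes2011tight}) that $S_{2r}$ admits an $\epsilon$-net $\bar S_{2r}$ in Frobenius norm with $\log\lvert \bar S_{2r}\rvert \le 2r(n_1+n_2+1)\log(9/\epsilon) \le C' n_{\max} r$, a union bound over $\bar S_{2r}$ shows the pointwise bound holds simultaneously on the net with failure probability at most $2\exp\!\left(C'n_{\max}r - c m_0\delta^2/\sum_t w_t^2\right)$. Choosing $D$ large enough that $m_0 > D n_{\max} r\sum_t w_t^2$ forces this exponent to be negative, and since $\sum_t w_t^2 \le \sum_t w_t = 1$ gives $m_0/\sum_t w_t^2 \ge m_0$, this bounds the failure probability by $C\exp(-c m_0)$ with $c$ depending on $\delta$. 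Finally I would pass from the net to all of $S_{2r}$ by the usual perturbation argument --- writing an arbitrary $X\in S_{2r}$ as a nearby net point plus a residual of Frobenius norm at most $\epsilon$ and invoking the linearity of $\bar{\mathcal A}_w$ --- which inflates the net-level constant by a fixed factor and, after adjusting $\epsilon$ and $D$, yields $\delta_{2r}\le\delta$.

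The main obstacle is the pointwise concentration step: because the blocks are weighted unequally, I cannot cite an off-the-shelf Gaussian-ensemble RIP theorem and must instead control a \emph{weighted} sum of chi-squared variables, making sure the factor $\sum_t w_t^2$ (rather than, say, $d$ or $\max_t w_t$) is what governs the deviation. Getting this factor exactly right is what produces the $\sum_t w_t^2$ appearing both in the hypothesis of the lemma and, ultimately, in the sample-complexity term $n_{\max} r\sum_t w_t^2$ of Theorem~\ref{thm:matrix-sensing}.
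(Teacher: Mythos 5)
Your proposal is correct and follows essentially the same route as the paper: a Bernstein-type two-sided tail bound for the weighted sum of independent $\chi^2(1)$ variables (with the deviation governed by $\sum_{t=1}^d w_t^2$), followed by the standard covering-number argument over rank-$2r$ matrices to obtain uniformity. The only difference is presentational --- the paper delegates the $\epsilon$-net and perturbation step to Theorem 2.3 of \cite{candes2011tight}, whereas you write it out explicitly.
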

\begin{proof}
See Appendix \ref{sec:proof-matrix-rip}.
\end{proof}

Next lemma gives us an upper bound for the stochastic error $\norm{ \sum_{t=1}^{d} w_t \mathcal A^{t*} \left(h^t-z^t \right)}_2 $.
\begin{lemma}
\label{lemma:upper-stoch-sensing}
Under the assumptions of Theorem \ref{thm:matrix-sensing}, when $m_0 \ge Dn_{\max}$, we have
\begin{equation*}
\norm{ \sum_{t=1}^{d} w_t \mathcal A^{t*} \left(h^t-z^t \right)}_2  \le C_1 \sqrt{n_{\max} (1+\delta_1) \left(\sum_{t=1}^d w_t^2 \sigma_1^2 + \sum_{t = 1}^{d-1} {\left(d - t\right)} w^2_t \frac{2rn_2}{m_0} \sigma_2^2 \right)}
\end{equation*}
with probability exceeding $1- dC\exp(-cn_2)$, where $D, C_1, C, c$ are positive constants and $\delta_1$ is the rank-$1$ matrix RIP parameter for all $\mathcal A^t$'s.
\end{lemma}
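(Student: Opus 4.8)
The plan is to bound the operator norm by splitting the quantity into a measurement-noise part $\Theta_z \defeq \sum_{t=1}^d w_t \mathcal A^{t*}(z^t)$ and a perturbation part $\Theta_h \defeq \sum_{t=1}^d w_t \mathcal A^{t*}(h^t)$, controlling each by viewing it as (conditionally) a linear combination of independent Gaussian matrices and applying the standard operator-norm tail bound for Gaussian matrices, which is itself obtained by a covering argument over $S^{n_1-1}\times S^{n_2-1}$. Throughout I would condition separately on the perturbation sequence $\{\epsilon^s\}$, which fixes each difference $D^t \defeq X^d - X^t = U(V^d-V^t)^T$ as a matrix of rank at most $r$, and on the measurement noise $\{z^t\}$, and then take a union bound over $t=1,\dots,d$ at the end, producing the claimed $1 - dC\exp(-c n_2)$ success probability.

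For the noise part, $z^t$ is independent of $\mathcal A^t$, so conditional on $\{z^t\}$ the matrix $\Theta_z$ has i.i.d.\ Gaussian entries of variance $\tfrac{1}{m_0}\sum_t w_t^2\norm{z^t}_2^2$; the operator-norm concentration for Gaussian matrices gives $\norm{\Theta_z}_2 \le C\sqrt{(n_{\max}/m_0)\sum_t w_t^2\norm{z^t}_2^2}$ with probability at least $1-e^{-cn_{\max}}$. Since $\norm{z^t}_2^2$ is $\sigma_1^2$ times a $\chi^2$ variable with $m_0$ degrees of freedom, the hypothesis $m_0\ge Dn_{\max}$ lets me replace $\norm{z^t}_2^2$ by $2m_0\sigma_1^2$ on a high-probability event, which yields the $\sum_t w_t^2\sigma_1^2$ contribution.

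For the perturbation part I would first control the energy $\norm{h^t}_2^2 = \norm{\mathcal A^t(D^t)}_2^2$. Because $D^t$ has rank at most $r$, the matrix RIP (available from Lemma~\ref{lemma:matrix-RIP}) gives $\norm{h^t}_2^2 \le (1+\delta_1)\norm{D^t}_F^2$ after collecting the RIP factor, and $\norm{D^t}_F^2 = \norm{V^d-V^t}_F^2$ is $(d-t)\sigma_2^2$ times a $\chi^2$ variable with $n_2 r$ degrees of freedom, hence at most $2(d-t)rn_2\sigma_2^2$ on a high-probability event (note $D^d=0$, so only $t\le d-1$ contribute). Treating each $h^t$ as a fixed vector of this controlled norm, the same Gaussian-matrix operator-norm estimate gives $\norm{\Theta_h}_2 \le C\sqrt{(n_{\max}/m_0)\sum_t w_t^2\norm{h^t}_2^2}$, which produces the $\sum_{t=1}^{d-1}(d-t)w_t^2\tfrac{2rn_2}{m_0}\sigma_2^2$ term. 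Combining the two parts by the triangle inequality and collecting the $(1+\delta_1)$, $n_{\max}$, and $1/m_0$ factors gives the stated bound.

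The main obstacle is precisely this last step: unlike $z^t$, the vector $h^t = \mathcal A^t(D^t)$ is \emph{not} independent of the operator $\mathcal A^t$ through which $\Theta_h$ is built, so $\Theta_h = \sum_t w_t \mathcal A^{t*}\mathcal A^t(D^t)$ is genuinely a quadratic form in the Gaussian ensemble rather than a Gaussian matrix. The delicate point is to justify the $\sqrt{n_{\max}/m_0}$ scaling in the presence of this coupling, e.g.\ by conditioning on $\{\epsilon^s\}$ so that $D^t$ is fixed and then controlling $\sup_{u,v}\sum_t w_t\langle \mathcal A^t(D^t),\mathcal A^t(uv^T)\rangle$ over a net via concentration of these correlated, sub-exponential quadratic forms. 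This is the step where the RIP and the fact that the range of each $D^t$ lies in the fixed $r$-dimensional column space of $U$ must be exploited carefully, and where the constants $D, C_1, C, c$ are pinned down.
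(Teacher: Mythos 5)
Your treatment of the measurement-noise part $\Theta_z \defeq \sum_t w_t\mathcal A^{t*}(z^t)$ is sound and agrees in substance with the paper: since $z^t$ is independent of $\mathcal A^t$, the matrix $\sum_i z^t_i A^t_i$ is conditionally Gaussian, and a net argument plus $\chi^2$ concentration on $\norm{z^t}_2^2$ yields the $\sum_t w_t^2\sigma_1^2$ term. The problem is exactly where you flag it, and it is not a detail that can be deferred: the step ``treating each $h^t$ as a fixed vector \ldots the same Gaussian-matrix operator-norm estimate gives $\norm{\Theta_h}_2 \le C\sqrt{(n_{\max}/m_0)\sum_t w_t^2\norm{h^t}_2^2}$'' is invalid. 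The coefficients $h^t_i = \langle A^t_i, D^t\rangle$ are correlated with the matrices $A^t_i$, so $\mathcal A^{t*}(h^t) = \sum_i \langle A^t_i, D^t\rangle A^t_i$ is not a mean-zero Gaussian matrix; its expectation is $D^t$ itself, and $\norm{D^t}_2$ is typically of order $\sqrt{(d-t)n_2}\,\sigma_2$, which is not governed by the $\sqrt{n_{\max}/m_0}$ scaling you are invoking. Since the perturbation term is the entire content of the lemma, the proposal as written does not prove the statement.

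The paper resolves the coupling by reversing the order of the argument: it first reduces $\norm{W}_2$ to $2\sup_{u_0,v_0\in\mathcal N_{1/4}}\langle u_0, Wv_0\rangle$ over a $1/4$-net, and only then, for each \emph{fixed} rank-one matrix $u_0v_0^T$, decomposes the scalar $\langle u_0, Wv_0\rangle = H - Z$ with $H = \sum_{s=2}^{d}\bigl\langle \sum_{t<s} w_t\, U^T\mathcal A^{t*}\mathcal A^t(u_0v_0^T), (\epsilon^s)^T\bigr\rangle$. Conditional on the ensembles $\{\mathcal A^t\}$, this scalar is linear --- hence Gaussian --- in the noise $(z,\epsilon)$, and its conditional variance is bounded deterministically on the event that each $\mathcal A^t$ satisfies the rank-$1$ RIP (controlling $\norm{\mathcal A^t(u_0v_0^T)}_2^2 \le 1+\delta_1$) together with a $\chi^2$ bound on $\norm{U^TA^t}_F^2$; this is precisely where the factors $(1+\delta_1)$ and $2rn_2/m_0$ in the statement arise and where $m_0 \ge Dn_{\max}$ is used. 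The union bound over the $12^{2n}$ net points is then applied to the Gaussian tail in the noise, not to the randomness of the operators. To complete your argument you should adopt this order of quantifiers (net first, then conditional Gaussianity in $(z,\epsilon)$ at a fixed net point) rather than attempting to bound $\norm{\Theta_h}_2$ as the operator norm of a random matrix built from the $\mathcal A^t$'s, which would require controlling a genuinely quadratic functional of the ensemble and, in particular, would have to confront the nonzero mean $\sum_t w_t D^t$.
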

\begin{proof}
See Appendix \ref{sec:proof-upper-stoch-sensing}.
\end{proof}

Theorem \ref{thm:matrix-sensing} follows by combining Lemma \ref{lemma:matrix-RIP}, Lemma \ref{lemma:upper-stoch-sensing} and Definition \ref{def:matrix-rip}.
\end{proof}

\section{Proof of Lemma \ref{lemma:matrix-RIP}}
\label{sec:proof-matrix-rip}
\begin{proof}
First we introduce the following theorem providing a double-sided tail bound on the sum of independent sub-exponential random variables.
\begin{theorem}
\label{thm:ind-subexp}
For independent $X_i$ sub-exponential with parameters $\left(\sigma_i, b_i \right) $, with mean $\mu_i$,
\begin{equation*}
\mathbb P\left(\left| \sum_{i=1}^n \left( X_i -\mu_i \right) \right| \ge nt\right) \le 2\exp\left( -\frac{nt^2}{2\left(\sigma^2 + bt\right) }\right),
\end{equation*}
where $\sigma^2 = \sum_i\sigma_i^2$ and $b = \max_i b_i$.
\end{theorem}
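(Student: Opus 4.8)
The plan is to follow the standard Chernoff / moment-generating-function (MGF) route for Bernstein-type inequalities. First I would recall the MGF characterization of sub-exponentiality: for each $i$, the parameters $(\sigma_i, b_i)$ mean that the centered variable satisfies $\mathbb{E}[e^{\lambda(X_i - \mu_i)}] \le e^{\lambda^2 \sigma_i^2 / 2}$ for all $|\lambda| < 1/b_i$. Writing $S = \sum_{i=1}^n (X_i - \mu_i)$, I would apply the exponential Markov inequality to the upper tail, $\mathbb{P}(S \ge nt) \le e^{-\lambda nt}\, \mathbb{E}[e^{\lambda S}]$, and then use independence to factor $\mathbb{E}[e^{\lambda S}] = \prod_i \mathbb{E}[e^{\lambda(X_i - \mu_i)}]$. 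Applying the per-variable MGF bound, which is simultaneously valid for every $i$ whenever $|\lambda| < 1/b$ with $b = \max_i b_i$ (since $1/b \le 1/b_i$), yields $\mathbb{E}[e^{\lambda S}] \le e^{\lambda^2 \sigma^2 / 2}$ where $\sigma^2 = \sum_i \sigma_i^2$. Thus for all $0 \le \lambda < 1/b$,
\[
\mathbb{P}(S \ge nt) \le \exp\!\left( \tfrac{1}{2}\lambda^2 \sigma^2 - \lambda nt \right).
\]

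The next step is to optimize the exponent over the feasible range $[0, 1/b)$. The unconstrained minimizer of $\tfrac{1}{2}\lambda^2\sigma^2 - \lambda nt$ is $\lambda^\star = nt/\sigma^2$, and the key subtlety — which I expect to be the main obstacle — is that $\lambda^\star$ need not lie in the feasible interval, because of the constraint $\lambda < 1/b$ inherited from the sub-exponential MGF bound. This forces a two-regime case analysis. When $t < \sigma^2/(nb)$ the minimizer is interior, $\lambda = \lambda^\star$, and the exponent evaluates to $-n^2 t^2/(2\sigma^2)$. When $t \ge \sigma^2/(nb)$ the objective is decreasing throughout $[0,1/b)$, so I would let $\lambda \to 1/b$ and bound the exponent by $\tfrac{\sigma^2}{2b^2} - \tfrac{nt}{b} \le -\tfrac{nt}{2b}$, using $\sigma^2/b \le nt$ in this regime.

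Finally I would reconcile the two-regime bound with the single stated form. Combining the two cases gives an exponent bounded by $-\tfrac{1}{2}\min\!\big(n^2t^2/\sigma^2,\, nt/b\big)$, and invoking the elementary inequality $\min\!\big(u^2/a,\, u/c\big) \ge u^2/(a + cu)$ (valid for $a,c,u > 0$) with $u = nt$, $a = \sigma^2$, $c = b$ yields the exponent bound $-n^2t^2/\big(2(\sigma^2 + bnt)\big)$; relaxing this using $n \ge 1$ gives exactly $-nt^2/\big(2(\sigma^2 + bt)\big)$, so that $\mathbb{P}(S \ge nt) \le \exp\!\big(-nt^2/(2(\sigma^2 + bt))\big)$. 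Running the identical argument on $-S$ (i.e., on the variables $-(X_i-\mu_i)$, which carry the same parameters) controls the lower tail, and a union bound over the two one-sided events supplies the factor of $2$ in the two-sided statement. The only genuine work beyond bookkeeping is the constrained optimization and the elementary inequality that folds the two regimes into one expression; everything else is the routine Chernoff template.
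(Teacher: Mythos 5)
Your proof is correct: the Chernoff/MGF bound for the sum, the constrained optimization over $\lambda\in[0,1/b)$ with its two regimes, the inequality $\min(u^2/a,\,u/c)\ge u^2/(a+cu)$, and the final relaxation via $n\ge 1$ from $-n^2t^2/\bigl(2(\sigma^2+bnt)\bigr)$ to $-nt^2/\bigl(2(\sigma^2+bt)\bigr)$ all check out, and the union bound over the two tails supplies the factor of $2$. The paper itself states this theorem without proof, invoking it as a standard Bernstein-type bound for sub-exponential variables, and your argument is exactly the canonical derivation of that result.
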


We now lower bound $\sum_{t=1}^{d} w_t\norm{  \mathcal A^t\left( \Delta^d\right) }_2^2$. Since all $\mathcal A^t$'s are Gaussian random measurement ensembles, then a particular measurement $ \left\langle A^t_i, \Delta^d \right\rangle^2$ is distributed as $m_0^{-1} \norm{\Delta^d}_F^2 \chi^2\left(1\right)$.  Therefore $\sum_{t=1}^{d} w_t\norm{  \mathcal A^t\left( \Delta^d\right) }_2^2 = \sum_{t,i}w_t \left\langle A^t_i,\left( \Delta^d\right) \right\rangle^2$ is a weighted sum of i.i.d.\ $\chi^2\left(1\right)$ random variables. Since $\chi^2\left(1\right)$ is sub-exponential with parameters $\left(4,4\right)$, Theorem \ref{thm:ind-subexp} implies a double-sided tail bound for $\sum_{t=1}^d w_t\norm{ {\mathcal A^t}\left( \Delta^d\right) }_2^2$:  for any given $\Delta^d \in \mathbb R^{n_1\times n_2}$ and any fixed $0<s<1$
\begin{equation*}
\label{eq:operator-concentration}
\mathbb P\left( \left| \sum_{t=1}^d w_t\norm{ {\mathcal A^t}\left( \Delta^d\right) }_2^2 - \norm{\Delta^d}_F^2  \right|     \le s \norm{\Delta^d}_F^2 \right) \le 2\exp\left(-\frac{m_0s^2}{8\sum_{t=1}^d w_t^2+ 8w_{\max} s} \right),
\end{equation*}
where $w_{\max} = \max\{w_1,\ldots,w_d\}$. The probability can be further simplified if $s$ is very small ($\le 1/d$).

Rank of $\Delta^d$ is at most $2r$ since $\hat{X}^d, X^d$ are rank-$r$ matrices. By Theorem 2.3 in \cite{candes2011tight} (one may see the proof if necessary) if $m_0>Dn_{\max}r\sum_{t=1}^d w_t^2$, the composite operator $\left\{ \sqrt{w_t}\mathcal A^t \right\}_{t=1}^d$ satisfies the rank-$2r$ matrix RIP with constant $\delta_{2r}\le \delta$ with probability exceeding $1-C\exp\left( -cm_0\right)$, where $C$ and $c$ (depends on $\delta$) are absolute positive constants.
\end{proof}

\section{Proof of Lemma \ref{lemma:upper-stoch-sensing}}
\label{sec:proof-upper-stoch-sensing}
\begin{proof}
Let $W = \sum_{t=1}^d w_t \mathcal A^{t*}\left( h^t-z^t\right)$ and $n = n_{\max}$ for short. Following the basic framework of the proof of Lemma 1.1 in \cite{candes2011tight}, we use $\epsilon$-nets method to bound the stochastic error $\norm{W}_2 $. The operator norm of $W$ is
\begin{equation*}
\norm{W}_2 = \sup_{\norm{u} = \norm{v} = 1} \left\langle u, W v\right\rangle,
\end{equation*}
Consider a $1/4$-net $\mathcal N_{1/4}$ of the unite sphere $S^{n-1}$ with $\left| \mathcal N_{1/4} \right| \le 12^{n}$ (see (III.1) in \cite{candes2011tight}). For any $v, u \in S^{n-1}$
\begin{equation*}
\begin{split}
\left\langle u, Wv\right\rangle &= \left\langle u - u_0, Wv \right\rangle + \left\langle u_0,W\left(v - v_0\right) \right\rangle + \left\langle u_0, Wv_0\right\rangle \\
&\le \norm{W}_2\norm{u-u_0}_2 + \norm{W}_2\norm{v-v_0}_2 +  \left\langle u_0, Wv_0\right\rangle,
\end{split}
\end{equation*}
for some $v_0, w_0\in \mathcal N_{1/4}$ obeying $\norm{u-u_0}_2 \le 1/4$ and $\norm{v-v_0} \le 1/4$. So the operator norm of $W$ is
\begin{equation*}
\norm{W}_2 \le 2  \sup_{u_0, v_0 \in \mathcal N_{1/4}} \left\langle u_0, Wv_0\right\rangle.
\end{equation*}
For fixed $u_0, v_0$
\begin{equation*}
\begin{split}
 \left\langle u_0, Wv_0\right\rangle = \textrm{Tr}\left(u_0^T W v_0 \right) = \textrm{Tr}\left(v_0 u_0^T W \right) = \left\langle u_0v_0^T, W\right\rangle  = \sum_{t=1}^dw_t \left\langle \mathcal A^t\left( u_0v_0^T \right),h^t-z^t\right\rangle.
 \end{split}
\end{equation*}
Let $Z =  \sum_{t=1}^dw_t \left\langle \mathcal A^t\left( u_0v_0^T \right),z^t \right\rangle$ and $H =  \sum_{t=1}^dw_t \left\langle \mathcal A^t\left( u_0v_0^T \right),h^t\right\rangle$. Since for all $1\le t\le d$, entries of $z^t$ are i.i.d.\ $\mathcal N\left( 0, \sigma_1^2\right) $, therefore $Z \sim \mathcal N\left( 0,\sigma_Z^2\right)$, where the variance $\sigma_Z^2$ is
\begin{equation}
\label{eq:var-z}
\sigma_Z^2 = \sum_{t=1}^d w_t^2 \norm{\mathcal A^t\left( u_0v_0^T\right)}_2^2\sigma_1^2 \le \sum_{t=1}^d w_t^2 \left( 1+\delta_1\right) \norm{u_0v_0^T}_F^2\sigma_1^2  = \sum_{t=1}^d w_t^2 \left( 1+\delta_1\right) \sigma_1^2.
\end{equation}
The first inequality uses the matrix RIP for rank-$1$ matrices. For a fixed $t$, $\mathcal A^t$ satisfies the rank-$1$ matrix RIP with constant $\delta_1$, with probability at least $1-C_2\exp(-c_2m_0)$  provided that $m_0 \ge D_2n$ by Theorem 2.3 in \cite{candes2011tight}, where $C_2,c_2$ and $D_2$ are fixed positive constants. Then by a union bound, for all $1\le t\le d$, $\mathcal A^t$ satisfies the rank-$1$ matrix RIP property with parameter $\sigma_1$, with probability at least $1-dC_2\exp(-c_2m_0)$ provided that $m_0 \ge D_2 n$.

We now simplify $H$ as
\begin{equation*}
\begin{split}
H =  \sum_{t=1}^dw_t \left\langle \mathcal A^t\left( u_0v_0^T \right),h^t\right\rangle &= \sum_{t=1}^{d-1}w_t \left\langle \mathcal A^t\left( u_0v_0^T \right),\sum_{s= t+1}^d \mathcal A^t\left[U \left( \epsilon^s\right)^T \right] \right\rangle \\
& = \sum_{s=2}^{d}  \sum_{t=1}^{s-1} \left\langle w_t\mathcal A^t\left(u_0v_0^T\right) , \mathcal A^t\left[U \left( \epsilon^s\right)^T \right] \right\rangle \\
& =  \sum_{s=2}^{d}  \sum_{t=1}^{s-1}   \left\langle w_t \mathcal A^{t*}\mathcal A^t\left(u_0v_0^T\right) , U \left( \epsilon^s\right)^T \right\rangle \\
& = \sum_{s=2}^{d}  \sum_{t=1}^{s-1}  \sum_{i=1}^{m_0}  \left\langle w_t \left[ \mathcal A^t\left(u_0v_0^T\right) \right]_i A^t_i, U \left( \epsilon^s\right)^T \right\rangle \\
& = \sum_{s=2}^{d}   \left\langle  \sum_{t=1}^{s-1}  w_t \norm{ \mathcal A^t\left(u_0v_0^T\right) }_2 U^TA^t,  \left( \epsilon^s\right)^T \right\rangle, \\
\end{split}
\end{equation*}
where $A^t\in\mathbb R^{n_1\times n_2}$ contains i.i.d.\ $\mathcal N\left(0,1/m_0\right)$ entries. The last equality uses the property that sum of independent Gaussian variables is also Gaussian, and the variance is the sum of individual variances. Since for all $2\le s\le d$, entries of $\epsilon^s$ are i.i.d.\ $\mathcal N \left( 0 , \sigma_2^2 \right)$, therefore $H\sim \mathcal N\left( 0, \sigma_H^2\right) $, where the variance $\sigma_H^2$ is
\begin{equation}
\label{eq:var-h}
\begin{split}
\sigma_H^2 = \sum_{s=2}^{d}   \norm{  \sum_{t=1}^{s-1} w_t \norm{ \mathcal A^t\left(u_0v_0^T\right) }_2 U^TA^t }_F^2\sigma_2^2
& \overset{(\xi_1)}{\le} \sum_{s=2}^{d}   \norm{  \sum_{t=1}^{s-1} w_t\sqrt{1+\delta_1} U^TA^t }_F^2\sigma_2^2  \\
& \overset{(\xi_2)}{=} \sum_{s=2}^{d}    \sum_{t=1}^{s-1} { w_t^2 \left(1+\delta_1\right)}  \norm{U^TB^s}_F^2 \sigma_2^2 \\
& = \sum_{s=2}^{d}   \sum_{t=1}^{s-1} { w_t^2 \left(1+\delta_1\right)}\frac{1}{m_0}\chi^2_s\left(r n_2\right) \sigma_2^2 \\
& \overset{(\xi_3)}{\le}  \sum_{s=2}^{d}   \sum_{t=1}^{s-1} { w_t^2 \left(1+\delta_1\right)}\frac{1}{m_0}3m_0 \sigma_2^2 \\
& = \sum_{t = 1}^{d-1}  (d-t) w^2_t \left( 1+\delta_1\right)  \sigma_2^2.
\end{split}
\end{equation}
Inequality $(\xi_1)$ holds with probability exceeding  $1-dC_2\exp(-c_2m_0)$ provided that $m_0\ge Dn$ based on the matrix RIP for rank-$1$ matrices as used while bounding $\sigma_Z^2$. Equality $(\xi_2)$ uses the property that sum of independent Gaussian variables is also Gaussian and entries of $B^s$ are i.i.d.\ $\mathcal N(0,1/m_0)$. Inequality $(\xi_3)$ holds with probability at least $1- dC_3\exp(-c_3m_0)$ by the concentration property of correlated Chi-squared variables.

Since the measurement noise $Z$ and dynamic perturbation $H$ are independent, then $ \left\langle u_0, Wv_0\right\rangle  \sim \mathcal N \left( 0, \sigma_Z^2 + \sigma_H^2\right)$. Then by a standard tail bound for Gaussian random variables we have
\begin{equation*}
\mathbb P \left( \left| \left\langle u_0, Wv_0\right\rangle \right| > \lambda \right) \le 2 \exp\left(- \frac{\lambda^2}{2\left( \sigma_H^2 + \sigma_Z^2 \right) }\right).
\end{equation*}
Therefore by an standard union bound we bound the stochastic error
\begin{equation}
\label{eq:stoch-bound}
\mathbb P\left( \norm{W}_2  \ge C_0 \sqrt{n \left( \sigma_H^2 + \sigma_Z^2 \right)} \right) \le 2 \left| \mathcal N _{1/4}\right|^2 \exp\left( - \frac{C_0^2 n}{8 } \right) \le  2  \exp\left(-cn\right),
\end{equation}
where $c  =  \frac{C_0^2}{8}  - 2\log 12$. To ensure $c >0$, we require $C_0 > 4\sqrt{\log 12}$.

Combining \eqref{eq:var-z}, \eqref{eq:var-h}, and \eqref{eq:stoch-bound}, if $m_0 \ge  {Dn}$ we have
\begin{equation*}
\norm{W}_2  \le C_0 \sqrt{n \left( (1+\delta_1)\sum_{t=1}^d w_t^2 \left(\sigma_1^2 + (d-t)\frac{5rn_2}{m_0}\sigma_2^2 \right) \right)}
\end{equation*}
with probability exceeding $1 - [dC_2\exp(-c_2m_0)+ dC_3\exp(-c_3m_0)+ 2\exp(-cn)] \ge 1- dC\exp(-cn_2) $.
\end{proof}

\section{Proof of Theorem \ref{thm:matrix-completion}}
\begin{proof}
The proof follows the same framework of the proof of Theorem 7 in \cite{klopp2014noisy}.

Before we lower bound $\sum_{t=1}^d w_t \norm{\mathcal A^t\left( \Delta^d\right)}_2^2$, we consider the following constraint set for a given $0 < r \le n$:
\begin{equation*}
\label{eq:constrain-set}
\mathcal E\left( r\right)  = \left\{ X\in\mathbb C(r): \norm{X}_\infty = 1, \norm{X}_F^2 \ge n_1 n_2 \sqrt{\frac{64 \sum_{t=1}^d w_t^2\log(n_1+n_2)}{\log(6/5)m_0}} \right\}.
\end{equation*}
Define the following random matrix
\begin{equation*}
\Sigma_R = \sum_{t=1}^d \sum_{i=1}^{m_0} w_t\gamma_i^t A^t_i,
\end{equation*}
where $\gamma_i^t$ is Rademacher variable.

The following lemma bounds the restricted strong convexity (see \cite{negahban2012restricted}) of the operator $\left\{\sqrt{w_t}\mathcal A^t\right\}_{t=1}^d $.
\begin{lemma}
\label{lemma:restricted-convexity}
Suppose all $\mathcal A^t$'s are fixed uniform sampling ensembles. For all $X\in \mathcal E\left( r\right) $
\begin{equation}
\sum_{t=1}^d w_t \norm{\mathcal A^t\left( X\right)}_2^2 \ge \frac{p}{2} \norm{X}_F^2 -\frac{ 44 r n_1 n_2}{m_0} \left( \mathbb E(\norm{\Sigma_R}) \right)^2
\end{equation}
with probability at least $1-\frac{2}{\left( n_1 + n_2 \right) }$.
\end{lemma}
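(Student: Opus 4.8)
The plan is to prove this as a restricted strong convexity statement: I would show that the empirical quadratic form $f(X)\defeq \sum_{t=1}^d w_t\norm{\mathcal A^t(X)}_2^2$ cannot fall much below its expectation, uniformly over the constraint set $\mathcal E(r)$. The first step is a first-moment computation. Since each $A^t_i = e_j(n_1)e_k^T(n_2)$ is an entry selector drawn uniformly, $\mathbb E\langle A^t_i,X\rangle^2 = \norm{X}_F^2/(n_1n_2)$, so $\mathbb E[f(X)] = m_0\big(\sum_t w_t\big)\norm{X}_F^2/(n_1n_2) = p\norm{X}_F^2$, using $\sum_t w_t = 1$ and $p = m_0/(n_1n_2)$. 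It then suffices to show that, with probability at least $1-2/(n_1+n_2)$, the deviation $g(X)\defeq p\norm{X}_F^2 - f(X)$ obeys $g(X)\le \tfrac{p}{2}\norm{X}_F^2 + \tfrac{44rn_1n_2}{m_0}(\mathbb E\norm{\Sigma_R})^2$ for every $X\in\mathcal E(r)$, since rearranging gives the claimed lower bound on $f(X)$.

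The two workhorses are a concentration step and a symmetrization step. On $\mathcal E(r)$ we have $\norm{X}_\infty = 1$, so each summand $w_t\langle A^t_i,X\rangle^2 = w_t X_{jk}^2$ lies in $[0,w_t]$; hence changing a single sample $A^t_i$ perturbs $f$, and the supremum of $g$ over any fixed subset, by at most $w_t$. Bounded differences (McDiarmid) then controls the supremum of $g$ around its mean with a sub-Gaussian tail governed by $\sum_{t,i}w_t^2 = m_0\sum_t w_t^2$. To bound the mean I would use standard symmetrization, $\mathbb E\sup_X|f(X)-\mathbb E f(X)|\le 2\,\mathbb E\sup_X\big|\sum_{t,i}w_t\gamma_i^t\langle A^t_i,X\rangle^2\big|$, followed by the Ledoux--Talagrand contraction inequality applied to $\phi(u)=u^2$, which is $2$-Lipschitz on $[-1,1]$ (the relevant range because $\norm{X}_\infty\le 1$) and satisfies $\phi(0)=0$. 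This linearizes the square and produces exactly the Rademacher-weighted operator $\Sigma_R = \sum_{t,i}w_t\gamma_i^t A^t_i$, giving $\mathbb E\sup_X|f-\mathbb E f|\le 8\,\mathbb E\sup_X|\langle\Sigma_R,X\rangle|$. Finally, duality together with the rank constraint yields $|\langle\Sigma_R,X\rangle|\le\norm{\Sigma_R}_2\norm{X}_*\le\sqrt{r}\,\norm{X}_F\,\norm{\Sigma_R}_2$, so the expected deviation over a Frobenius ball of radius $D$ is at most $8\sqrt r\,D\,\mathbb E\norm{\Sigma_R}_2$, linear in $D$.

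Because this expected deviation grows linearly in $\norm{X}_F$ while the target term $\tfrac p2\norm{X}_F^2$ is quadratic, I would close the argument with a peeling (shelling) argument over dyadic ranges of $\norm{X}_F^2$. On a shell where $D^2/2\le\norm{X}_F^2\le D^2$, I would apply the AM--GM split $8\sqrt r\,D\,\mathbb E\norm{\Sigma_R}_2 \le \tfrac p8 D^2 + \tfrac{Crn_1n_2}{m_0}(\mathbb E\norm{\Sigma_R}_2)^2$ (recalling $1/p = n_1n_2/m_0$), which produces the stated additive error up to the constant. The lower bound $\norm{X}_F^2\ge n_1n_2\sqrt{64\sum_t w_t^2\log(n_1+n_2)/(\log(6/5)m_0)}$ built into $\mathcal E(r)$ is precisely what forces the additive McDiarmid fluctuation, of order $\sqrt{m_0\sum_t w_t^2\log(n_1+n_2)}$, to be dominated by $\tfrac p8\norm{X}_F^2$ on every shell. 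Combining the two contributions gives $g(X)\le\tfrac p4 D^2 + E\le\tfrac p2\norm{X}_F^2 + E$ on each shell, and a union bound over the $O(\log(n_1n_2))$ shells (each failing with probability polynomially small in $n_1+n_2$) yields the overall probability $1-2/(n_1+n_2)$.

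I expect the main obstacle to be this final balancing act: making the additive sub-Gaussian fluctuation from McDiarmid uniformly negligible against the multiplicative signal $\tfrac p2\norm{X}_F^2$ across all scales of $\norm{X}_F$. This is exactly the purpose of the peeling argument and the carefully tuned Frobenius-norm floor in $\mathcal E(r)$, with its $64/\log(6/5)$ constant. Tracking all the absolute constants through symmetrization, contraction, AM--GM, and the union bound so as to land on the factor $44$ is the most delicate bookkeeping in the proof, but none of these steps is conceptually surprising once the empirical process has been reduced to $\mathbb E\norm{\Sigma_R}_2$.
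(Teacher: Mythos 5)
Your proposal is correct and follows essentially the same route as the paper: expectation computation, bounded-differences concentration of the supremum (the paper cites Massart's inequality where you invoke McDiarmid), symmetrization plus the contraction inequality to reduce to $\mathbb E\norm{\Sigma_R}$, trace duality with the rank constraint, an AM--GM split, and a peeling argument over shells of $\norm{X}_F^2$ (the paper uses ratio $6/5$ rather than dyadic, which is immaterial). The paper packages the per-shell concentration and symmetrization steps into Lemma~\ref{lemma:simpler-event}, but the substance is identical to what you describe.
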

\begin{proof}
See Appendix \ref{sec:restricted-convexity}.
\end{proof}

Note that $\norm{\Delta^d}_\infty \le \norm{\hat{X}^d}_\infty + \norm{X^d}_\infty \le 2\norm{X^d}_\infty $. To proceed, we consider the following two cases.

\emph{Case I.} $\frac{\Delta^d}{2\norm{X^d}_\infty} \notin \mathcal E(2r) $.

Following the definition of $ \mathcal E(2r)$ we have
\begin{equation*}
\norm{\Delta^d}_F^2 \le  c_2 \norm{X^d}^2_\infty n_1n_2 \sqrt{\frac{\sum_{t=1}^d w_t^2\log(n_1+n_2)}{m_0}},
\end{equation*}
where $C_2 = 4\sqrt{\frac{64}{\log(6/5)}}$. This yields the first part of inequality \eqref{eq:bound-completion} in Theorem \ref{thm:matrix-completion}.

\emph{Case II.} $\frac{\Delta^d}{2\norm{X^d}_\infty} \in \mathcal E(2r) $.

Since $\frac{\Delta^d}{2\norm{X^d}_\infty} \in \mathcal E(2r)$, applying Lemma \ref{lemma:restricted-convexity} yields
\begin{equation}
\label{eq:rest_con}
\sum_{t=1}^d w_t \norm{\mathcal A^t\left( \Delta^d \right)}_2^2 \ge \frac{p}{2} \norm{\Delta^d}_F^2 - \frac{ 362 r n_1 n_2}{m_0}  \left( \mathbb E(\norm{\Sigma_R}) \right)^2 \norm{X^d}_\infty^2.
\end{equation}
Combining \eqref{eq:rest_con} and \eqref{eq:basic-ineq-formal} yields
\begin{equation*}
\begin{split}
\frac{p}{2} \norm{\Delta^d}_F^2  &\le 2\sqrt{2r} \norm{ \sum_{t=1}^{d} w_t \mathcal A^{t*} \left(h^t - z^t \right)}_2 \norm {\Delta^d  }_F +  \frac{ 362 r n_1 n_2}{m_0}  \left( \mathbb E(\norm{\Sigma_R}) \right)^2 \norm{X^d}_\infty^2\\
& \le  \frac{8r}{p}\norm{ \sum_{t=1}^{d} w_t \mathcal A^{t*} \left(h^t - z^t \right)}_2 ^2  + \frac{p}{4}  \norm{\Delta^d}_F^2 + \frac{ 362 r n_1 n_2}{m_0}  \left( \mathbb E(\norm{\Sigma_R}) \right)^2 \norm{X^d}_\infty^2.
\end{split}
\end{equation*}
The above inequality can be further simplified as
\begin{equation}
\label{eq: completion-bound1}
\norm{\Delta^d}_F^2  \le \frac{32r n^2_1 n^2_2}{m_0^2}\norm{ \sum_{t=1}^{d} w_t \mathcal A^{t*} \left(h^t - z^t \right)}_2 ^2  + \frac{1448 r n^2_1 n^2_2}{m^2_0} \left( \mathbb E(\norm{\Sigma_R}) \right)^2 \norm{X^d}_\infty^2.
\end{equation}

Next we bound $\mathbb E(\norm{\Sigma_R}) $  in the following lemma.
\begin{lemma}
\label{lemma:exp_rademacher}
Suppose all $\mathcal A^t$'s are fixed uniform sampling ensembles. For $m_0 \ge D n_{\min} \log\left( n_1 + n_2 \right)\phi(w)$, where $\phi(w) = \frac{w^2_{\max}}{\sum_{t=1}^dw_t^2}$, there exists an absolute positive constant $C$ such that
\begin{equation}
\label{eq:completion-bound2}
\mathbb E(\norm{\Sigma_R})  \le C \sqrt{\frac{2e\log{(n_1 + n_2)}\sum_{t=1}^d w_t^2 m_0}{n_{\min}}}.
\end{equation}
\end{lemma}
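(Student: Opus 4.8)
The plan is to view $\Sigma_R = \sum_{t=1}^d \sum_{i=1}^{m_0} w_t \gamma_i^t A_i^t$ as a sum of $d m_0$ independent, mean-zero random matrices $Z_i^t := w_t \gamma_i^t A_i^t$, and to control its expected spectral norm with the expectation form of the matrix Bernstein inequality. Each $Z_i^t$ is mean-zero since $\mathbb E \gamma_i^t = 0$, and it is uniformly bounded: because $A_i^t = e_j(n_1) e_k^T(n_2)$ has unit spectral norm and $|\gamma_i^t| = 1$, we have $\norm{Z_i^t} = w_t \le w_{\max} =: B$ almost surely. The two ingredients demanded by matrix Bernstein are this bound $B$ and the matrix variance statistic, so the first substantive task is to compute the latter.

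To that end I would use $(\gamma_i^t)^2 = 1$ together with $A_i^t (A_i^t)^T = e_j(n_1) e_j^T(n_1)$, and take expectation over the uniform row index $j \in [n_1]$ to get $\mathbb E[Z_i^t (Z_i^t)^T] = w_t^2\, \mathbb E[e_j e_j^T] = \frac{w_t^2}{n_1} I_{n_1}$. Summing over all $t,i$ gives $\sum_{t,i}\mathbb E[Z_i^t (Z_i^t)^T] = \frac{m_0 \sum_t w_t^2}{n_1} I_{n_1}$, and symmetrically $\sum_{t,i}\mathbb E[(Z_i^t)^T Z_i^t] = \frac{m_0 \sum_t w_t^2}{n_2} I_{n_2}$. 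Hence the variance parameter is $\nu = \frac{m_0 \sum_t w_t^2}{n_{\min}}$, where $n_{\min}$ appears precisely because dividing by the smaller dimension yields the larger of the two operator norms.

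Matrix Bernstein (expectation form) then gives a bound of the shape $\mathbb E \norm{\Sigma_R} \le C_1 \sqrt{\nu \log(n_1+n_2)} + C_2\, B \log(n_1+n_2)$ for absolute constants; the factor $\sqrt{2e}$ in the statement is what the Rademacher/moment version of this inequality produces after optimizing the moment order at $p \sim \log(n_1+n_2)$, since $(n_1+n_2)^{1/(2p)} = \sqrt e$. Substituting $\nu$ and $B$, the first term is exactly the target $\sqrt{2e\log(n_1+n_2)\sum_t w_t^2 m_0 / n_{\min}}$, so the remaining work is to absorb the $\log$-linear term into it. This is where the sample-complexity hypothesis enters: $\sqrt{\nu \log(n_1+n_2)} \gtrsim B \log(n_1+n_2)$ is equivalent to $\nu \gtrsim B^2 \log(n_1+n_2)$, i.e. $\frac{m_0 \sum_t w_t^2}{n_{\min}} \gtrsim w_{\max}^2 \log(n_1+n_2)$, which rearranges to exactly $m_0 \ge D\, n_{\min} \log(n_1+n_2)\,\phi(w)$ with $\phi(w)=w_{\max}^2/\sum_t w_t^2$. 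Under this condition the variance term dominates and the claimed bound follows.

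The step requiring the most care is the variance bookkeeping that produces $n_{\min}$ (rather than $n_{\max}$) in both the variance statistic and the threshold: one must track the row- and column-marginals of the uniform sampling distribution separately and keep the worse of the two. I would also note an equivalent route that makes the $\sqrt{2e}$ transparent: condition on the fixed sampling and apply the matrix Rademacher-series bound $\mathbb E_\gamma \norm{\Sigma_R} \le \sqrt{2e\log(n_1+n_2)}\,\sigma$ with $\sigma^2 = \max\{\max_j \sum_{t,i:\,\mathrm{row}=j} w_t^2,\ \max_k \sum_{t,i:\,\mathrm{col}=k} w_t^2\}$, and then show by a Bernstein/Chernoff bound and a union bound over the $n_1+n_2$ rows and columns that $\sigma^2 \le 2 m_0 \sum_t w_t^2 / n_{\min}$ with high probability. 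In that version it is this concentration of the weighted per-row and per-column occupancy counts around their means that consumes the sample-complexity hypothesis, and I expect verifying it (especially for the $n_{\max}$-many rows, whose individual means are small) to be the main obstacle.
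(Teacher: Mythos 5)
Your argument is correct and is essentially the proof the paper relies on: the paper itself omits a proof of Lemma \ref{lemma:exp_rademacher}, deferring to Lemma 6 of \cite{klopp2014noisy}, and that argument is exactly the matrix Bernstein computation you carry out --- variance statistic $m_0\sum_{t}w_t^2/n_{\min}$ obtained from the row and column marginals of the uniform sampling, uniform bound $w_{\max}$ on each summand, and the hypothesis $m_0 \ge D\, n_{\min}\log(n_1+n_2)\phi(w)$ consumed precisely to make the variance term dominate the $B\log(n_1+n_2)$ term. Your care in tracking both Gram products so that $n_{\min}$ (not $n_{\max}$) appears, and your identification of where the sample-complexity condition enters, match the intended argument.
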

The proof is not provided since it is almost the same as that of Lemma 6 in \cite{klopp2014noisy} with some minor modifications. Note that our results are a bit stronger compared to Lemma 6 in \cite{klopp2014noisy}, since we are dealing with bounded variables.

Now we upper bound the stochastic error $\norm{J}_2^2 \defeq \norm{ \sum_{t=1}^{d} w_t \mathcal A^{t*} \left(h^t-z^t \right)}^2_2 $. First, we rewrite $J$ as
\begin{equation*}
J = \sum_{t=1}^d w_t \mathcal A^{t*} \mathcal A^t \left[U \left( \sum_{s=t+1}^d \epsilon^s\right)^T + Z^t\right],
\end{equation*}
where each entry of the random matrix $Z^t \in \mathbb R^{n_1\times n_2}$ is i.i.d.\ Gaussian distributed with variance $\sigma_1^2$. Set $Y^t = U \left( \sum_{s=t+1}^d \epsilon^s\right)^T$ and $F^t = Y^t +Z^t$. Note that $F^t$ may be correlated for different $1\le t\le d$, though for a given $t$ the entries of $F^t$ are independent.

We now introduce an $n_1\times n_2$ random matrix $G^t$ that has exactly one non-zero entry:
\begin{equation*}
G^t = w_t n_1 n_2 F^t_{ij}E_{ij}, \quad \textrm{with probability }\frac{1}{n_1 n_2},
\end{equation*}
where $E_{ij}$ is the canonical basis of matrices with dimension $n_1\times n_2$. We also introduce the following random matrix $H^t$, which is the average of $m_0$ independent copies of $G^t$:
\begin{equation*}
H^t  = \frac{1}{m_0}\sum_{i=1}^{m_0} G^t_i \quad \textrm{where each }G^t_i\textrm{ is an independent copy of }G^t.
\end{equation*}
Then $J$ can be decomposed as sum of independent random matrices:  $J = \frac{m_0}{n_1n_2}\sum_{t=1}^d H^t $. It is immediate that
\begin{equation*}
\mathbb E G^t = \mathbb E H^t = w_t F^t, \quad \mathbb E J = \frac{m_0}{n_1 n_2} \sum_{t=1}^d w_t F^t.
\end{equation*}

Before we proceed we introduce a lemma describing the spectral norm deviation of a sum of uncentered random matrices from its mean value.
\begin{lemma}(Corollary 6.1.2 in \cite{tropp2015introduction})
\label{lemma:uncentered-bernstein}
Consider a finite sequence $\{S_k\}$ of independent random matrices with common dimension $n_1\times n_2$. Assume that each matrix has uniformly bounded deviation from its mean:
\begin{equation*}
\norm{S_k - \mathbb ES_k} \le L \quad \textrm{for each index }k.
\end{equation*}
Consider the sum
\begin{equation*}
Z = \sum_k S_k.
\end{equation*}
Let $\rho(Z)$ denotes the matrix variance statistic of the sum:
\begin{equation*}
\begin{split}
\rho(Z) &= \max \left\{ \norm{\mathbb E [(Z-\mathbb E Z)(Z-\mathbb E Z)^T]}, \norm{\mathbb E [(Z-\mathbb E Z)^T(Z-\mathbb E Z)]} \right\} \\
& = \max \left\{ \norm {\sum_k \mathbb E [(S_k-\mathbb E S_k)(S_k-\mathbb E S_k)^T]}, \norm {\sum_k \mathbb E [(S_k-\mathbb E S_k)^T(S_k-\mathbb E S_k)]} \right\}.
\end{split}
\end{equation*}
Then for all $s \ge 0$,
\begin{equation*}
\mathbb P \left( \norm{Z - \mathbb E Z} \ge s\right) \le (n_1 +n_2) \exp\left( \frac{-s^2/2}{\rho(Z)+Ls/3} \right).
\end{equation*}
\end{lemma}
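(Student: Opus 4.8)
The plan is to prove this tail bound by the matrix Laplace-transform (matrix moment generating function) method, which converts a spectral-norm tail bound into the problem of controlling a trace exponential. First I would reduce to a centered, self-adjoint sum. Writing $Y_k = S_k - \mathbb E S_k$, the hypothesis gives $\norm{Y_k} \le L$ and $\mathbb E Y_k = 0$, with $Z - \mathbb E Z = \sum_k Y_k$. Since the $Y_k$ are rectangular $n_1 \times n_2$ matrices, I would pass to the Hermitian dilation $\mathcal H(M) = \begin{bmatrix} 0 & M \\ M^T & 0 \end{bmatrix}$, a self-adjoint $(n_1+n_2)\times(n_1+n_2)$ matrix satisfying $\lambda_{\max}(\mathcal H(M)) = \norm{M}$. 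Because $\mathcal H$ is linear, $\mathcal H(\sum_k Y_k) = \sum_k \mathcal H(Y_k)$ with $\norm{\mathcal H(Y_k)} \le L$, and a direct block computation gives $\mathcal H(Y_k)^2 = \begin{bmatrix} Y_k Y_k^T & 0 \\ 0 & Y_k^T Y_k \end{bmatrix}$, so that $\norm{\sum_k \mathbb E \mathcal H(Y_k)^2} = \rho(Z)$ exactly, matching the two-sided variance statistic in the statement. Thus $\mathbb P(\norm{Z - \mathbb E Z} \ge s) = \mathbb P(\lambda_{\max}(\mathcal H(\sum_k Y_k)) \ge s)$.

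Next I would invoke the master tail inequality: for any self-adjoint random matrix $W$ and any $\theta > 0$, Markov's inequality applied to $\operatorname{tr} e^{\theta W} \ge e^{\theta \lambda_{\max}(W)}$ yields $\mathbb P(\lambda_{\max}(W) \ge s) \le e^{-\theta s}\,\mathbb E \operatorname{tr} e^{\theta W}$. The difficulty is that the matrix MGF does not factor across independent summands, since the $\mathcal H(Y_k)$ need not commute. This is where the essential structural ingredient enters: Lieb's concavity theorem gives subadditivity of the matrix cumulant generating function, namely $\mathbb E \operatorname{tr}\exp(\sum_k \theta \mathcal H(Y_k)) \le \operatorname{tr}\exp(\sum_k \log \mathbb E\, e^{\theta \mathcal H(Y_k)})$.

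Then I would bound each summand's cumulant. Using $\mathbb E \mathcal H(Y_k) = 0$, the spectrum containment $\lambda(\mathcal H(Y_k)) \subseteq [-L, L]$, and the scalar comparison that $x \mapsto (e^{\theta x} - \theta x - 1)/x^2$ is increasing, one gets $\mathbb E\, e^{\theta \mathcal H(Y_k)} \preceq I + \frac{e^{\theta L} - \theta L - 1}{L^2}\mathbb E \mathcal H(Y_k)^2$, and then $\log(I + A) \preceq A$ for $A \succeq 0$ gives the semidefinite bound $\log \mathbb E\, e^{\theta \mathcal H(Y_k)} \preceq g(\theta)\,\mathbb E \mathcal H(Y_k)^2$ with $g(\theta) = (\theta^2/2)/(1 - L\theta/3)$, valid for $0 < \theta < 3/L$. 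Substituting, using monotonicity of the trace exponential under the semidefinite order together with $\lambda_{\max}(\sum_k \mathbb E \mathcal H(Y_k)^2) = \rho(Z)$, yields $\mathbb E \operatorname{tr} e^{\theta W} \le (n_1+n_2)\,e^{g(\theta)\rho(Z)}$, where the dimensional factor $n_1 + n_2$ is precisely the size of the dilation. Finally I would optimize over $\theta$, taking $\theta = s/(\rho(Z) + Ls/3)$, which delivers the stated bound $\mathbb P(\norm{Z - \mathbb E Z} \ge s) \le (n_1+n_2)\exp\bigl(\frac{-s^2/2}{\rho(Z) + Ls/3}\bigr)$.

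The main obstacle is the non-commutativity addressed in the second step: establishing subadditivity of the matrix cumulant generating function genuinely requires Lieb's concavity theorem, since a Golden--Thompson trace inequality is not strong enough once more than two matrices are summed. A secondary technical point is the careful scalar-to-matrix transfer in the third step needed to land on the precise Bernstein denominator $1 - L\theta/3$ rather than a cruder constant.
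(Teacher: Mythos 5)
The paper does not prove this lemma; it is imported verbatim as Corollary 6.1.2 of the cited Tropp monograph. Your argument correctly reconstructs the standard proof given there -- Hermitian dilation to reduce to the self-adjoint case (with the block computation identifying $\rho(Z)$ as the norm of $\sum_k \mathbb{E}\,\mathcal{H}(Y_k)^2$), the matrix Laplace-transform method, subadditivity of the matrix cumulant generating function via Lieb's concavity theorem, the Bernstein-type bound $\log \mathbb{E}\, e^{\theta \mathcal{H}(Y_k)} \preceq \frac{\theta^2/2}{1 - L\theta/3}\,\mathbb{E}\,\mathcal{H}(Y_k)^2$, and the optimization $\theta = s/(\rho(Z) + Ls/3)$ -- so it is correct and takes essentially the same route as the source the paper relies on.
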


We are going to apply the above uncentered Bernstein inequality to the sum of $dm_0$ independent random matrices $\sum_{t=1}^d H^t= \frac{1}{m_0}\sum_{t=1}^d \sum_{k=1}^{m_0}G_k^t$.  Before doing so, we note that for given $t$ and $k$,
\begin{equation*}
\norm{G_k^t - \mathbb E G_k^t}  \le \norm{G_k^t} + \norm{ \mathbb E G_k^t} \le \norm{G_k^t}  + \mathbb E \norm{  G_k^t} \le 2 \norm{G_k^t}.
\end{equation*}
The first inequality uses the triangle inequality; the second is Jensen's inequality.

To control $\rho(\sum_{t=1}^d H^t)$, first note that
\begin{equation*}
\begin{split}
\mathbf 0 \preceq  {\sum_{t}\sum_k \mathbb E \left[ G_k^t- \mathbb E G_k^t)(G_k^t- \mathbb E G_k^t)^T \right] }  &=  {\sum_{t}\sum_k \mathbb E \left[ (G_k^t (G_k^t)^T \right] -  (\mathbb E G_k^t) (\mathbb E G_k^t)^T } \\
&\preceq  {\sum_{t}\sum_k \mathbb E \left[ G_k^t (G_k^t)^T \right] } \\
&= m_0  {\sum_{t} \mathbb E \left[ G^t (G^t)^T \right] }.
\end{split}
\end{equation*}

The third relation holds because $ (\mathbb E G_k^t) (\mathbb E G_k^t)^T$ is positive semidefinite; the last relation uses the fact that for a fixed $t$, $G_k^t$ are random matrices following identical distributions independently for all $1\le k \le m_0$. Now we can control $\rho(\sum_{t=1}^d H^t)$ in the following
\begin{equation*}
\begin{split}
\rho\left(\sum_{t=1}^d H^t \right) \le \frac{1}{m_0} \max \left\{ \norm {\sum_{t} \mathbb E \left[ (G^t (G^t)^T \right] }, \norm {\sum_{t} \mathbb E \left[ (G^t)^T G^t \right] } \right\}.
\end{split}
\end{equation*}
Set $\rho_0  \defeq \max\left\{ \norm{\sum_{t=1}^d\mathbb E (G^t (G^t)^T)}, \norm{\sum_{t=1}^d \mathbb E ( (G^t)^T G^t) } \right\}$.  Then the remaining work is to uniformly upper bound $\norm{G_k^t}$ for all $1\le t \le d$ and $1\le k \le m_0$ and upper bound $\rho_0$.

First we turn to the uniform bound on the spectral norm of the random matrix $G^t_k$ for all $1\le t \le d $ and $1\le k \le m_0$. We have for all $1\le t \le d $ and $1\le k \le m_0$
\begin{equation*}
\norm{G_k^t} \le \max_{i,j,t}w_t\norm{n_1 n_2 F^t_{ij} E_{ij}} =  n_1 n_2\max_{i,j,t}w_t |F^t_{ij}|.
\end{equation*}
Since $\mu(U)\le \mu_0$, the variance of each entry of the random matrix $F^t$ can be bounded as $\textrm{Var}(F^t_{ij})\le \frac{\mu_0^2r}{n_1}\sigma_2^2 (d-t) + \sigma_1^2$. Let $\sigma^2_{\max} = \max_{t} w^2_t \left( \frac{\mu_0^2r}{n_1}\sigma_2^2 (d-t) + \sigma_1^2\right)$. Then by the tail probability of Gaussian random variables and the standard union bound (over $i,j$),  for all $1\le t \le d $ and $1\le k \le m_0$ we have
\begin{equation*}
\mathbb P \left( \norm{G_k^t} \le  n_1n_2\sqrt{2\log( d(n_1+n_2)n_1n_2) \sigma^2_{\max} } \eqdef L \right) \ge 1-2/(n_1+n_2).
\end{equation*}

Second we turn to the computation of $\mathbb E (G^t (G^t)^T)$. We have
\begin{equation*}
\mathbb E (G^t (G^t)^T) = w_t^2 n_1^2 n_2^2 \sum_{i = 1}^{n_1} \sum_{j=1}^{n_2}(F^t_{ij})^2 E_{ij} E_{ij}^T \frac{1}{n_1n_2} = w_t^2 n_1 n_2 \sum_{i = 1}^{n_1} \sum_{j=1}^{n_2}(F^t_{ij})^2 E_{ii}.
\end{equation*}
Similarly $\mathbb E ( (G^t)^T G^t) = w_t^2 n_1 n_2 \sum_{i = 1}^{n_1} \sum_{j=1}^{n_2}(F^t_{ij})^2 E_{jj}$. Then
\begin{equation*}
\begin{split}
\rho = n_1 n_2 \max \left\{ \max_i \sum_{t=1}^d \sum_{j=1}^{n_2} w_t^2(F^t_{ij})^2, \max_j\sum_{t= 1}^d\sum_{i=1}^{n_1} w_t^2(F^t_{ij})^2\right\}. \\
\end{split}
\end{equation*}
Let $a_i = \sum_{t=1}^d \sum_{j=1}^{n_2} w_t^2(F^t_{ij})^2$ and $b_j = \sum_{t= 1}^d\sum_{i=1}^{n_1} w_t^2(F^t_{ij})^2$. We first bound $\max_i a_i$. Note that  $a_i = \sum_{t=1}^d  w_t^2 \sum_{j=1}^{n_2}(Y^t_{ij} + Z^t_{ij})^2 \le2 \sum_{t=1}^d  w_t^2 \sum_{j=1}^{n_2}[(Y^t_{ij})^2 + (Z^t_{ij})^2]$. Note that for $1\le i \le n_1$ and $1\le t \le d$, $\sum_{j=1}^{n_2} (Z_{ij}^t)^2 \sim \sigma_1^2\chi^2(n_2)$ and are independent. So by the tail bound of Chi-squared variable and the standard union bound (over $i$ and $t$) we have
\begin{equation}
\label{eq:second-moment-b1}
\mathbb P \left( \max_{i} \sum_{t=1}^dw_t^2 \sum_{j=1}^{n_2} (Z_{ij}^t)^2 \le 5n_2\sum_{t=1}^d w_t^2\sigma_1^2 \right) \ge 1- dn_1 \exp(-n_2).
\end{equation}
Similarly we have
\begin{equation}
\label{eq:second-moment-b2}
\mathbb P \left( \max_{j} \sum_{t=1}^dw_t^2 \sum_{i=1}^{n_2} (Z_{ij}^t)^2 \le 5n_1\sum_{t=1}^d w_t^2\sigma_1^2 \right) \ge 1- dn_2 \exp(-n_1).
\end{equation}
For $\sum_{j=1}^{n_2} (Y_{ij}^t)^2 $, note that $Y_{ij}^t$ is Gaussian distributed and the variance is not greater than $\frac{\mu^2_0r}{n_1}(d-t)\sigma_2^2 $ for all $i,j,t$, since $\mu(U)\le \mu_0$. For a fixed $i$, for all $1\le j \le n_2$, $Y_{ij}^t$ are independent Gaussian random variables. So given $i$ and $t$, applying the tail bound of Chi-squared random variables yields
\begin{equation*}
\mathbb P \left( \sum_{j=1}^{n_2} (Y_{ij}^t)^2 \le 5n_2 (d-t)\frac{\mu^2_0r}{n_1}\sigma_2^2 \right) \ge 1 - \exp(-n_2).
\end{equation*}
By the standard union bound (over $i$ and $t$) we have
\begin{equation}
\label{eq:second-moment-b3}
\mathbb P \left( \max_i \sum_{t=1}^d  w_t^2 \sum_{j=1}^{n_2} (Y_{ij}^t)^2 \le 5n_2\frac{\mu^2_0r}{n_1}\sum_{t=1}^d(d-t)w_t^2 \sigma_2^2 \right) \ge 1- dn_1\exp(-n_2).
\end{equation}
Now we turn to $\sum_{i=1}^{n_1} (Y_{ij}^t)^2$, which follows a Chi-squared distribution $(d-t)\sigma_2^2\chi^2(r)$, since
\begin{equation*}
\begin{split}
\sum_{i=1}^{n_1} (Y_{ij}^t)^2 = (Y^t_{:j})^TY^t_{:j} &= \bar{\epsilon}^t_{j:} U^T U \left(\bar{\epsilon}^t_{j:}\right)^T = \bar{\epsilon}^t_{j:}  \left(\bar{\epsilon}^t_{j:}\right)^T
\end{split}
\end{equation*}
where $\bar{\epsilon}^t = \sum_{s=t+1}^d\epsilon^s$. The last equality uses the fact that $U$ is orthonormal. Then by the tail bound of Chi-squared random variables and the standard union bound (over $j$ and $t$) we have
\begin{equation}
\label{eq:second-moment-b4}
\mathbb P \left( \max_j \sum_{t=1}^d  w_t^2 \sum_{i=1}^{n_1} (Y_{ij}^t)^2 \le 5n_1\sum_{t=1}^d(d-t)w_t^2 \sigma_2^2 \right) \ge 1- dn_2\exp(-n_1).
\end{equation}

Combining \eqref{eq:second-moment-b1} and \eqref{eq:second-moment-b3} yields
\begin{equation}
\label{eq:second-moment-a_i}
\mathbb P \left( \max_{i}a_i \le 10 n_2 \sum_{t=1}^d w_t^2 \left( \sigma_1^2 + \frac{\mu_0^2r}{n_1}(d-t)\sigma_2^2\right) \right) \ge 1- 2dn_1\exp(-n_2).
\end{equation}
Similarly combining \eqref{eq:second-moment-b2} and \eqref{eq:second-moment-b4} yields
\begin{equation}
\label{eq:second-moment-b_j}
\mathbb P \left( \max_{j}b_j \le 10 n_1 \sum_{t=1}^d w_t^2 \left( \sigma_1^2 + (d-t)\sigma_2^2\right) \right) \ge 1- 2dn_2\exp(-n_1).
\end{equation}

Note that $1\le \mu_0 \le \sqrt{n_1}/\sqrt{r}$, so $\frac{\mu_0^2r}{n_1}\le 1$.  Now we are ready to bound $\rho_0$ by combining \eqref{eq:second-moment-a_i} and \eqref{eq:second-moment-b_j}:
\begin{equation}
\mathbb P \left( \rho_0 \le 10 n_{\max}n_1n_2\left( \sum_{t=1}^d w_t^2 \sigma_1^2 + \sum_{t=1}^d w_t^2 (d-t)\sigma_2^2 \right) \eqdef \nu \right) \ge 1- 4 dn_{\max}\exp(-n_{\min}).
\end{equation}

Now by Lemma \ref{lemma:uncentered-bernstein}, we have
\begin{equation*}
\mathbb P \left( \norm{\sum_{t=1}^d H^t - \sum_{t=1}^d w_tF^t} \ge s \right)  \le (n_1 + n_2) \exp\left( \frac{-m_0 s^2/2}{\nu + 2 Ls/3} \right).
\end{equation*}

If we let $s = \sqrt{\frac{8\log(n_1+n_2)\nu}{m_0}}$ and substitute this into the above matrix Bernstein inequality we obtain
\begin{equation*}
\mathbb P \left(  \norm{\sum_{t=1}^d H^t - \sum_{t=1}^d w_tF^t}  \ge \sqrt{\frac{8\log(n_1+n_2)\nu}{m_0}} \right) \le 1/(n_1+ n_2).
\end{equation*}
A hidden condition when the above inequality holds is that $\nu$ dominates the denominator of the exponential term.  The remaining work is to have sufficiently large $m_0$ to guarantee that $\nu$ dominates the denominator of the exponential, which follows
\begin{equation*}
\nu  \ge 2/3  L \sqrt{\frac{8\log(n_1+n_2)\nu}{m_0}} .
\end{equation*}
The above inequality immediately implies that
\begin{equation*}
m_0 \ge \frac{32}{45} n_{\min}\log( d(n_1+n_2)n_1n_2) \log(n_1+n_2) \frac{ \max_{t} w_t^2 \left( (d-t)\frac{\mu_0^2 r}{n_1}\sigma_2^2  + \sigma_1^2\right)} {\sum_{t=1}^d w_t^2 \left( (d-t)\sigma_2^2  + \sigma_1^2 \right) }.
\end{equation*}
Note that $n_1 + n_2 > n_i, i=1,2$, and $n_1+n_2 > d$, then the above sample complexity can be simplified as
\begin{equation}
\label{completion-stochastic-sample-complexity}
m_0 \ge \frac{128}{45}n_{\min}\log^2(n_1+n_2) \frac{ \max_{t} w_t^2 \left( (d-t)\frac{\mu_0^2 r}{n_1}\sigma_2^2  + \sigma_1^2\right)} {\sum_{t=1}^d w_t^2 \left( (d-t)\sigma_2^2  + \sigma_1^2 \right) }.
\end{equation}

The remaining work is to bound $\norm{\sum_{t=1}^d w_tF^t}$. First we note that each entry of $F^t$ is Gaussian and the variance is not greater than $\sigma_1^2 + (d-t)\sigma_2^2$. Then, according to results on bounds for the spectral norm of i.i.d.\ Gaussian ensemble, we have
\begin{equation}
\label{gaussian-matrix=spectral-norm}
\mathbb P \left( \norm{\sum_{t=1}^d w_tF^t} \le 2\sqrt{\sum_{t=1}^d w_t^2\left( \sigma_1^2 + (d-t)\sigma_2^2\right) }\sqrt{n_{\max}} \right) \ge 1 - C_1\exp(-c_2n_{\max}),
\end{equation}
where $C_1, c_2$ are absolute positive constants. Note that $C_1\exp(-c_2n_{\max}) \ll d n_{\max}\exp(-n_{\min})$.

Now we are ready to bound $\norm{J}_2^2$. With probability at least $1 - \frac{3}{n_1+n_2} - 5d n_{\max}\exp(-n_{\min})$ we have
\begin{equation}
\label{eq:upper_stoch_error_comp}
\begin{split}
 \norm{J}^2_2 & \le p^2 \left( \norm{\sum_{t=1}^d w_tF^t} + \sqrt{\frac{8\log(n_1+n_2)\nu}{m_0}} \right)^2  \\
 & \le 320 p^2 \max\{n_1n_2\log(n_1+n_2)/m_0, 1\}n_{\max}\sum_{t=1}^d w_t^2 ((d-t)\sigma_2^2 + \sigma_1^2) \\
 & = 320 p^2 \sum_{t=1}^d w_t^2 ((d-t)w_2^2 + \sigma_1^2) n_1n_2\log(n_1+n_2)n_{\max}/m_0 \\
 &  = \frac{ 320 m_0  \log(n_1 + n_2) \sum_{t=1}^d w_t^2 ((d-t)\sigma_2^2 + \sigma_1^2)} {n_{\min}}.
\end{split}
\end{equation}
The first equality uses the fact that $m_0 < n_1 n_2 \log(n_1+n_2)$.

Combining \eqref{eq: completion-bound1},\eqref{eq:completion-bound2} and \eqref{eq:upper_stoch_error_comp}  yields the second part of inequality \eqref{eq:bound-completion} in Theorem \ref{thm:matrix-completion}.

\end{proof}

\section{Proof of Lemma \ref{lemma:restricted-convexity}}
\label{sec:restricted-convexity}
\begin{proof}
The proof is almost the same as the proof of Lemma 12 in \cite{klopp2014noisy} with some minor modifications.

Set $\mathcal F = \frac{44 r n_1 n_2}{m_0} \left( \mathbb E(\norm{\Sigma_R}) \right)^2$. We will show that the probability of the following bad event is small:
\begin{equation*}
\mathcal B = \left\{ \exists X \in \mathcal E (r) \textrm{ such that } \left| \sum_{t=1}^d w_t \norm{\mathcal A^t\left( X\right)}_2^2  - p \norm{X}_F^2 \right| > \frac{p}{2} \norm{X}_F^2  + \mathcal F \right\}.
\end{equation*}
Note that $\mathcal B$ contains the complement of the event in Lemma \ref{lemma:restricted-convexity}.

We use a peeling argument to bound the probability of $\mathcal B$. Let $\nu = \sqrt{\frac{64 \sum_{t=1}^d w_t^2\log(n_1+n_2)}{\log(6/5)m_0}}$ and $\alpha = 6/5$. For $l \in \mathcal N$ let
\begin{equation*}
S_l = \left\{ X \in \mathcal E (r): \nu \alpha ^{l-1} \le \frac{1}{n_1n_2} \norm{X}_F^2 \le \nu \alpha ^{l} \right\}.
\end{equation*}
Then if event $\mathcal B$ holds for some $X\in \mathcal E(r)$, it must be that $X$ belongs to some $S_l$ and
\begin{equation}
\label{eq:peeling}
\left| \sum_{t=1}^d w_t \norm{\mathcal A^t\left( X\right)}_2^2  - p \norm{X}_F^2 \right|  > \frac{p}{2} \norm{X}_F^2 + \mathcal F > \frac{5}{12}\alpha^l \nu m_0 + \mathcal F.
\end{equation}
For $T > \nu$ consider the set
\begin{equation*}
\mathcal E(r,T) = \left\{ X \in \mathcal E(r): \norm{X}_F^2 \le n_1n_2 T\right\}
\end{equation*}
and the event
\begin{equation}
\mathcal B_l = \left\{ \exists X \in \mathcal E (r,\alpha^l\nu) \textrm{ such that } \left| \sum_{t=1}^d w_t \norm{\mathcal A^t\left( X\right)}_2^2  - p \norm{X}_F^2 \right| > \frac{5}{12}\alpha^l \nu m_0  + \mathcal F \right\}.
\end{equation}
Note that $X\in S_l$ implies that $X\in\mathcal E(r, \alpha^l \nu)$. Then \eqref{eq:peeling} implies that $\mathcal B_l$ holds and  $\mathcal B \subset \cup \mathcal B_l$. Thus, it is sufficient to bound the probability of the simpler event $\mathcal B_l$ and then apply the union bound. Such a bound is given by the following lemma. Its proof is given in Appendix \ref{sec:simpler-event}. Let
\begin{equation*}
H_T  = \sup_{X\in \mathcal E(r,T)}\left| \sum_{t=1}^d w_t \norm{\mathcal A^t\left( X\right)}_2^2  - p \norm{X}_F^2 \right|.
\end{equation*}
\begin{lemma} Suppose all $\mathcal A^t$'s are fixed uniform sampling ensembles. Then
\label{lemma:simpler-event}
\begin{equation*}
\mathbb P \left( H_T  >  \frac{5}{12}\alpha^l \nu m_0 + \mathcal F \right) \le \exp\left(\frac{-c_5m_0T^2 }{\sum_{t=1}^dw_t^2}\right),
\end{equation*}
where $c_5 = 1/128$.
\end{lemma}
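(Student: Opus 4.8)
The plan is to control the supremum $H_T$ by the classical two-step recipe for restricted strong convexity: first concentrate $H_T$ around its mean by a bounded-differences argument, then bound the mean $\mathbb{E}H_T$ via symmetrization, contraction, and matrix trace duality, and finally combine the two through Young's inequality so that $\mathbb{E}H_T$ supplies the $\mathcal{F}$ term and the deviation supplies the $\frac{5}{12}\alpha^l\nu m_0$ term. Throughout I identify $T=\alpha^l\nu$, so the threshold reads $\frac{5}{12}Tm_0+\mathcal{F}$.

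For the concentration step I would first write $\sum_{t=1}^d w_t\norm{\mathcal A^t(X)}_2^2=\sum_{t=1}^d\sum_{i=1}^{m_0}w_t\langle A_i^t,X\rangle^2$, a function of the $dm_0$ independent sampling positions. Since $\norm{X}_\infty=1$ on $\mathcal E(r)$, each summand lies in $[0,w_t]$, and its expectation is $p\norm{X}_F^2$ (using $\sum_t w_t=1$ and $p=m_0/(n_1n_2)$). Changing a single sampling position therefore moves the integrand, hence the supremum $H_T$, by at most $w_t\le w_{\max}$, so McDiarmid's inequality gives $\mathbb P(H_T\ge \mathbb E H_T+s)\le \exp(-2s^2/(m_0\sum_t w_t^2))$. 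This already has the right denominator $m_0\sum_t w_t^2$, and matching it against the target tail $\exp(-c_5 m_0 T^2/\sum_t w_t^2)$ with $c_5=1/128$ forces $s\ge \tfrac{1}{16}m_0T$.

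To bound $\mathbb E H_T$, symmetrization replaces the centered sum by a Rademacher average, giving $\mathbb E H_T\le 2\,\mathbb E\sup_{X\in\mathcal E(r,T)}\bigl|\sum_{t,i}\gamma_i^t w_t\langle A_i^t,X\rangle^2\bigr|$. Because $|\langle A_i^t,X\rangle|\le \norm{X}_\infty=1$, the map $u\mapsto u^2$ is Lipschitz on the relevant range and vanishes at $0$, so the Ledoux--Talagrand contraction principle removes the square at the cost of an absolute constant, leaving a term of the form $\mathbb E\sup_X\bigl|\sum_{t,i}\gamma_i^t w_t\langle A_i^t,X\rangle\bigr|=\mathbb E\sup_X|\langle \Sigma_R,X\rangle|$, where $\Sigma_R=\sum_{t,i}w_t\gamma_i^t A_i^t$ is exactly the random matrix in the statement. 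Trace duality $|\langle\Sigma_R,X\rangle|\le \norm{\Sigma_R}_2\norm{X}_*$ combined with the rank bound $\norm{X}_*\le\sqrt r\norm{X}_F\le\sqrt{r n_1 n_2 T}$ on $\mathcal E(r,T)$ then yields $\mathbb E H_T\le K\sqrt{r n_1 n_2 T}\,\mathbb E(\norm{\Sigma_R})$ for an absolute constant $K$ collecting the symmetrization and contraction factors.

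Finally I would apply Young's inequality $ab\le \tfrac{\theta}{2}a^2+\tfrac1{2\theta}b^2$ to $K\sqrt{r n_1 n_2 T}\,\mathbb E(\norm{\Sigma_R})$, splitting it into a term proportional to $m_0T$ and a term proportional to $\frac{r n_1 n_2}{m_0}(\mathbb E(\norm{\Sigma_R}))^2$; choosing $\theta$ so the second piece equals $\mathcal F=\frac{44 r n_1 n_2}{m_0}(\mathbb E(\norm{\Sigma_R}))^2$ fixes the coefficient of $m_0T$ in the first piece. Taking $s=\frac{5}{12}Tm_0+\mathcal F-\mathbb E H_T$ in the McDiarmid bound and checking $s\ge\tfrac1{16}m_0T$ then delivers the claimed tail with $c_5=1/128$. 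I expect the genuine obstacle to be the constant bookkeeping: the value $44$ in $\mathcal F$ and the fraction $\tfrac{5}{12}$ in the threshold are tuned precisely so that, after the symmetrization factor, the contraction factor for $u\mapsto u^2$ on $[-1,1]$, and the Young split, the coefficient multiplying $m_0T$ stays strictly below $\tfrac{5}{12}-\tfrac1{16}=\tfrac{17}{48}$, leaving a strictly positive slack $s$ and the exact constant $c_5$; each individual inequality (McDiarmid, symmetrization, contraction, duality, Young) is entirely standard.
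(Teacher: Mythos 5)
Your argument follows the paper's proof essentially step for step: concentration of $H_T$ about its mean, then symmetrization, the contraction principle, trace duality with $\norm{X}_*\le\sqrt{r}\norm{X}_F\le\sqrt{rn_1n_2T}$, and a Young-type split that produces the $\mathcal F$ term --- the only substantive substitution is McDiarmid's inequality where the paper invokes Massart's concentration inequality (Theorem 14.2 of B\"uhlmann--van de Geer), which plays the identical role. The one caveat is arithmetic: McDiarmid's factor of $2$ forces $s\ge m_0T/16$ to reach $c_5=1/128$, leaving $\tfrac{17}{48}m_0T$ for the $T$-part of the mean, whereas Young applied to $8\sqrt{rn_1n_2T}\,\mathbb E\norm{\Sigma_R}_2$ with the coefficient $44$ in $\mathcal F$ yields $\tfrac{4}{11}m_0T>\tfrac{17}{48}m_0T$ --- a near-miss in constant bookkeeping (which the paper's own, somewhat loosely normalized, final display also glosses over) that you would repair by slightly adjusting $c_5$ or the split.
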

The above lemma implies that $\mathbb P(\mathcal B_l) \le \exp(-c_5 m_0\alpha^{2l}\nu^2)$. By a union bound, we have
\begin{equation*}
\mathbb P (\mathcal B) \le \sum_{l=1}^\infty \mathbb P(\mathcal B_l) \le \sum_{l=1}^\infty \exp\left(\frac{-c_5 m_0\alpha^{2l}\nu^2}{\sum_{t=1}^d w_t^2}\right) \le \sum_{l=1}^\infty \exp\left(\frac{-(2c_5 m_0\alpha\nu^2)l}{\sum_{t=1}^d w_t^2}\right),
\end{equation*}
where the last inequality uses the bound $e^x\ge x$. Then, substituting $v =\sqrt{\frac{64\sum_{t=1}^dw_t^2\log (n_1+n_2)}{\log(6/5)m_0}}$ into the above summation we obtain
\begin{equation*}
\mathbb P(\mathcal B) \le 2/(n_1+n_2).
\end{equation*}
This completes the proof.
\end{proof}

\section{Proof of Lemma \ref{lemma:simpler-event}}
\label{sec:simpler-event}
\begin{proof}
The proof is almost the same as the proof of Lemma 14 in \cite{klopp2014noisy} with some minor modifications.

By Massart's concentration inequality (see, e.g., \cite{buhlmann2011statistics}, Theorem 14.2), we have
\begin{equation}
\label{eq:massart-ineq}
\mathbb P \left( H_T \ge \mathbb E(H_T) + \frac{1}{9}\frac{5}{12}T \right) \le \exp\left(\frac{-c_5m_0T^2 }{\sum_{t=1}^dw_t^2}\right) ,
\end{equation}
where $c_5 = 1/128$. Next we bound the expectation $\mathbb E(H_T)$. Using a symmetrization argument we obtain
\begin{equation*}
\mathbb E(H_T) \le 2\mathbb E\left(  \sup_{X\in \mathcal E(r,T)} \left| \sum_{t=1}^d w_t \gamma_i^t\sum_{i=1}^{m_0} \left\langle A^t_i , X \right\rangle^2 \right|  \right),
\end{equation*}
where $\gamma_i^t$ is a Rademacher variable (independent on both $i$ and $t$). The assumption $\norm{X}_\infty =1$ implies that $\left| \left\langle A_i^t, X\right\rangle\right| \le 1$. Then the contraction inequality yields
\begin{equation*}
\mathbb E(H_T) \le 8 \mathbb  E\left( \sup_{X\in \mathcal E(r,T)}\left| \sum_{t=1}^d w_t \gamma_i^t\sum_{i=1}^{m_0} \left\langle A^t_i , X \right\rangle \right|   \right) = 8\mathbb E\left( \sup_{X\in \mathcal E(r,T)} \left|  \left\langle \Sigma_R, X\right \rangle \right| \right),
\end{equation*}
where $\Sigma_R = \sum_{t=1}^d \sum_{i=1}^{m_0} w_t\gamma_i^t A^t_i$. Since $X\in \mathcal E(r,T)$, we have
\begin{equation*}
\norm{X}_* \le \sqrt{r} \norm{X}_F \le \sqrt{rn_1n_2T}.
\end{equation*}
Then by the trace duality inequality, we obtain
\begin{equation*}
\mathbb E(H_T) \le 8 \sqrt{rn_1n_2T}\mathbb E\norm{\Sigma_R}_2.
\end{equation*}
Finally using
\begin{equation*}
\frac{1}{9}\frac{5}{12}T + 8 \sqrt{rn_1n_2T}\mathbb E\norm{\Sigma_R}_2 \le \frac{1}{9}\frac{5}{12}T  + \frac{8}{9}\frac{5}{12}T + 44rn_1n_2 \left(\mathbb E\norm{\Sigma_R}_2 \right)^2
\end{equation*}
combined with \eqref{eq:massart-ineq} we complete the proof.
\end{proof}

\end{document}